\documentclass{article}
\PassOptionsToPackage{table}{xcolor}

\PassOptionsToPackage{numbers,compress}{natbib}
\usepackage[preprint]{template/neurips_2026}
\usepackage{hyperref}
\usepackage{times}
\usepackage{latexsym}
\usepackage{amsmath}
\usepackage{amssymb}
\usepackage{amsthm}
\usepackage{graphicx}
\usepackage{wrapfig}
\usepackage{booktabs}
\usepackage{multirow}
\usepackage{enumitem}
\usepackage{xcolor}
\usepackage[most]{tcolorbox}
\tcbuselibrary{skins}
\usepackage{subcaption}
\usepackage{pifont}
\usepackage{float}
\usepackage{algorithm}
\usepackage{algpseudocode}
\DeclareMathOperator*{\argmin}{arg\,min}
\usepackage{tabularx}
\usepackage{booktabs}
\newcolumntype{L}{>{\raggedright\arraybackslash}X}
\newcolumntype{S}{>{\raggedright\arraybackslash\hsize=0.75\hsize}X}
\newcolumntype{B}{>{\raggedright\arraybackslash\hsize=1.0833\hsize}X}
\newcolumntype{N}{>{\raggedright\arraybackslash\hsize=0.6\hsize}X}
\newcolumntype{W}{>{\raggedright\arraybackslash\hsize=1.1333\hsize}X}
\newtheorem{proposition}{Proposition}
\newtheorem*{proposition*}{Proposition}

\newcommand{\cmark}{\ding{51}}
\newcommand{\xmark}{\ding{55}}
\AtBeginDocument{%
  \setlength{\abovedisplayskip}{4pt}%
  \setlength{\belowdisplayskip}{4pt}%
  \setlength{\abovedisplayshortskip}{2pt}%
  \setlength{\belowdisplayshortskip}{2pt}%
}

\definecolor{PTGreen}{RGB}{92,134,110}
\definecolor{PTGreenDark}{RGB}{52,92,75}
\definecolor{PTLightGreen}{RGB}{242,250,245}
\definecolor{PTWhite}{RGB}{255,255,255}

\title{Same Signal, Opposite Meaning: Direction-Informed Adaptive Learning for LLM Agents}

\author{%
    \textbf{Ziming Li\textsuperscript{1}},
    \textbf{Jiatan Huang\textsuperscript{1}},
    \textbf{Xiaoguang Guo\textsuperscript{1}},
    \textbf{Guilin Wang\textsuperscript{2}},
    \textbf{Chuxu Zhang\textsuperscript{1}\thanks{Corresponding author.}}\\
    \small\texttt{\{ziming.li,jiatan.huang,xiaoguang.guo,chuxu.zhang\}@uconn.edu} \\
    \small\texttt{gwang@njit.edu} \\
    \textsuperscript{1}{University of Connecticut},
    \textsuperscript{2}{New Jersey Institute of Technology}
}

\begin{document}
\maketitle

% ============================================================
% ABSTRACT
% ============================================================
\vspace{-0.1in}
\begin{abstract}
\vspace{-0.1in}
Adaptive test-time compute for LLM agents aims to invoke extra computation only when it improves performance. Existing methods typically use confidence-, uncertainty-, or difficulty-based gates, assuming a fixed direction from the gating signal through compute need to the value of computation. This makes gating a utility-calibration problem: gating signals should align with whether extra computation improves the final outcome over the base policy. We show that this alignment is unstable: the same signal predicts rollout benefit in one setting and rollout harm in another, with reversals across environments and backbones even when the task is fixed. Wrong-direction gates can therefore worsen performance by precisely selecting harmful states.
This reversal reflects a deeper distinction between compute need and compute suitability: a high uncertainty signal may indicate decision-difficult states where rollouts help compare alternatives, or intervention-unsuitable states where the current context does not support useful rollout-based improvement. Under this two-source model, fixed-direction gates are unreliable across heterogeneous settings. To address this, we propose DIAL (\textbf{D}irection-\textbf{I}nformed \textbf{A}daptive \textbf{L}earning), a sparse gate trained from signal-agnostic counterfactual exploration to learn the utility direction of state features per (environment, backbone). Across six environments and three backbones, DIAL yields a stronger overall success-cost trade-off than fixed-direction baselines. 
\end{abstract}

% ============================================================
% 1. INTRODUCTION
% ============================================================
\section{Introduction}
\label{sec:intro}
\vspace{-0.1in}
% --- Module 1: Why this problem matters (~45%) ---

Large language models (LLMs) have rapidly advanced across a wide range of tasks, from question answering and code generation to embodied decision making~\cite{yao2023react,
shinn2023reflexion, zhou2024lats, li2025mass, li2026graph}. However, a single forward pass is often insufficient for complex tasks that require evaluating alternatives, verifying intermediate results, or searching over candidate actions~\cite{yao2023tree, hao2023reasoning, snell2024scaling}. This has motivated test-time compute, where additional computation at inference time improves decision quality~\cite{wang2026flare,DBLP:journals/corr/abs-2506-12928}.
However, test-time compute is expensive: each decision point may require multiple rollouts or candidate evaluations~\cite{zhou2024lats,yao2023tree}. Applying it uniformly across diverse agent settings is wasteful, since only a fraction of steps benefit. This has motivated \emph{adaptive gating}: using uncertainty signals to decide \emph{when} extra compute is worthwhile. Prior work explores various mechanisms, including entropy thresholding~\cite{seag2025, cats2025, corefine2026}, calibrated confidence~\cite{s1_2025, han2025tokenbudget}, vote disagreement~\cite{catts2026}, and learned difficulty estimators based on hidden states or reinforcement learning~\cite{diffadapt2025, zhu2025llmalreadyknows, adaptthink2025, thinkless2025}. These methods differ in mechanism but share a common assumption: larger uncertainty, disagreement, or difficulty signals indicate greater value of additional computation. We call this the \emph{fixed-direction} assumption. In interactive agents, this makes gating a utility-calibration problem: the signal must predict the value of computation, namely whether invoking the optimizer improves the final task outcome. This differs from standard confidence calibration, which aligns confidence with action correctness.

We test this assumption empirically using token entropy, the most common signal in prior work~\cite{seag2025,cats2025,corefine2026}. The fixed-direction assumption predicts $\rho(\text{entropy}, U) > 0$: higher entropy should mark states where rollout has positive value. Figure~\ref{fig:intro_reversal} finds this violated on half of six agent environments: high entropy can predict that extra compute hurts rather than helps. Table~\ref{tab:signal-discovery} further shows that the same task can flip sign across model backbones, so a signal's utility semantics are determined by the (environment, backbone) pair, not by the environment or backbone alone. This reversal is consequential: when the assumed direction is wrong, sharper gates worsen performance by more precisely selecting harmful states. The most predictive signal also varies across environments (Table~\ref{tab:signal-identity}), so adaptive compute cannot rely on a fixed signal with a fixed direction. This raises a central question: \emph{if signal-utility direction is not fixed, can adaptive compute still be principled?}

\begin{wrapfigure}{r}{0.48\textwidth}
	\vspace{-12pt}
	\centering
	\includegraphics[width=0.46\textwidth]{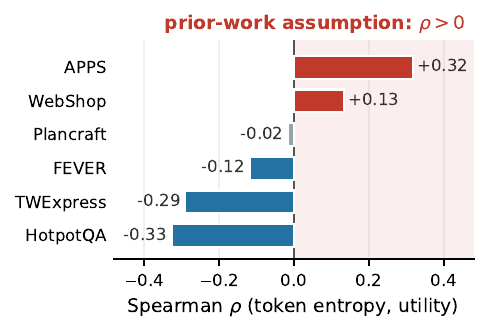}
	\vspace{-0.1in}
	\caption{\textbf{The fixed-direction assumption fails on half
			of the environments.} Spearman correlation
		$\rho(\text{entropy}, U)$ across six environments on
		Qwen3-4B.}
	\label{fig:intro_reversal}
	\vspace{-10pt}
\end{wrapfigure}

Answering this requires understanding why direction reverses in the first place. Hidden in the fixed-direction assumption is a deeper conflation: uncertainty or difficulty measures whether a state may benefit from more compute, but not whether that compute will actually help.
We trace this reversal to two coexisting state types with opposite signal-utility relationships. \emph{Intervention-unsuitable} states (Type~I) arise when the current context does not support useful rollout-based improvement (e.g., missing evidence, misleading partial observations, or weak grounding); high uncertainty there predicts negative value of computation, since additional rollouts amplify unresolved ambiguity rather than resolve it. \emph{Decision-difficult} states (Type~D, e.g., coding tasks with multiple viable implementations) arise when multiple options exist; rollouts can sample different ones and find the best, so high uncertainty predicts positive value. The mixture depends on what the agent already encodes, so it is determined by the (environment, backbone) pair; the same uncertainty signal therefore carries opposite meanings depending on which type dominates, an instance of Simpson's paradox~\cite{simpson1951interpretation, kdd2023simpson}: the marginal signal-utility correlation reflects the mixture of two subgroups whose internal correlations point in opposite directions.

%The two-source decomposition explains why uncertainty-only gates fail. Two states with the same uncertainty $\sigma$ can have opposite-sign utility if they belong to different types, so no function of $\sigma$ alone can tell them apart (Proposition~\ref{prop:necessity}). The decomposition also predicts how $\rho(\sigma, U)$ depends on the Type~I/Type~D mixture, which we verify on held-out data (\S\ref{sec:theory-verification}). A reliable gate must therefore augment $\sigma$ with features that correlate with state type, such as step count as a proxy for information accumulation.
This is why uncertainty alone cannot reliably gate: two states at the same uncertainty level can have opposite utility, so no function of uncertainty alone can distinguish them. A reliable gate must therefore use additional features that correlate with state type, such as step count as a proxy for information accumulation. This also clarifies why replacing thresholds with learned difficulty estimators or confidence calibration does not automatically resolve the problem. Calibration aligns confidence with action correctness, whereas adaptive gating requires alignment with the counterfactual utility of a particular optimizer. Difficulty estimators may identify that a state is hard, but hardness can mean either that rollouts can productively compare alternatives or that the current context is unsuitable for improvement. Moreover, a learned gate trained only on uncertainty-triggered states can inherit the same fixed-direction assumption through selection bias, baking the prior into the training distribution rather than testing it. The decomposition further yields testable predictions about how $\rho(\sigma, U)$ changes with the Type~I/Type~D mixture, which we verify on held-out data in \S\ref{sec:theory-verification}.
This motivates data-driven direction learning: utility depends on interaction outcomes and cannot generally be inferred from model-internal properties alone. Building on this analysis, we propose DIAL (\textbf{D}irection-\textbf{I}nformed \textbf{A}daptive \textbf{L}earning), which learns the signal-utility direction for each (environment, backbone) setting from data. DIAL collects signal-agnostic paired counterfactual outcomes by running the optimizer and the base policy from the same state, then fits a sparse linear gate over a small set of state features augmented by LLM-proposed task-specific signals. Across six environments and three LLM backbones, DIAL consistently improves the success-cost trade-off over fixed-direction baselines, triggering frequently when rollouts help and rarely when they do not.
%Our main contributions are as follows:
%\begin{itemize}[leftmargin=*,nosep]
%\item \textbf{Empirical finding.} We demonstrate that signal--utility 
%direction reverses across environments and model backbones, and 
%that the most informative signal itself varies across settings.
%\item \textbf{Theoretical result.} We trace this reversal to two 
%coexisting state types with opposite signal--utility relationships, 
%and prove that no fixed-direction gate can be universally correct.
%\item \textbf{Adaptive gating method.} We propose DIAL, which 
%learns signal direction per environment instead of assuming it, 
%outperforming all fixed-direction baselines across six environments 
%with zero deployment overhead.
%\end{itemize}

\section{Related Work}
\vspace{-0.1in}
\label{sec:related}
Prior work on adaptive test-time compute uses diverse mechanisms, but shares a common assumption: gating signals have a fixed, setting-independent relationship with compute value. We organize the literature by how each line embeds this assumption.

\textbf{Gating on uncertainty signals.}
Signal-based methods allocate extra computation when token confidence, entropy, or disagreement exceeds a threshold, encoding the fixed-direction assumption directly in their trigger rule. In reasoning settings, this takes the form of extended chain-of-thought generation~\cite{seag2025, cats2025, corefine2026, thinkjustenough2025, s1_2025, han2025tokenbudget}. In agent settings, vote-based methods~\cite{catts2026} invoke an arbiter when candidate actions disagree, and verbalized uncertainty methods~\cite{auq2026} trigger reflection when self-reported confidence is low. All of these methods hardcode the direction from uncertainty to the value of computation; we show this direction is itself a function of (environment, backbone) and cannot be hardcoded.

\textbf{Gating on learned difficulty.}
Hidden-state probes~\cite{diffadapt2025, zhu2025llmalreadyknows} estimate task difficulty from internal representations, and RL-based methods~\cite{adaptthink2025, thinkless2025, aggarwal2025l1, paglieri2025learning} learn think/no-think policies end-to-end. Although these methods commit to neither a specific signal nor an explicit threshold, their evaluations primarily focus on homogeneous reasoning benchmarks such as GSM8K~\cite{DBLP:journals/corr/abs-2110-14168} and GPQA~\cite{DBLP:journals/corr/abs-2311-12022}, where higher difficulty is typically aligned with greater benefit from additional reasoning. The difficulty-compute relationship is thus never tested for reversal. We test it explicitly in heterogeneous agent environments, where it fails.

\textbf{Confidence calibration vs.\ utility calibration.}
Classical confidence calibration aligns a model's stated confidence with its action correctness; recent work shows that calibration quality is domain-dependent~\cite{tao2025revisiting, heo2025llmuncertainty, yadkori2024believe}. Our problem is distinct: \emph{utility calibration} aligns gating signals with the value of additional computation, not action correctness. Rational metareasoning~\cite{russell1991right, desabbata2024metareasoning} formalizes this VOC target but classically assumes $\text{VOC} \geq 0$; we show this fails when the same model both proposes and evaluates actions (\S\ref{sec:toy-model}). Beyond domain-dependent quality, the \emph{sign} of the signal-to-utility correlation reverses across (environment, backbone) pairs, so even correctness-calibrated confidence is insufficient for gating. Full discussion in Appendix~\ref{app:related}.

% ============================================================
% 3. WHY FIXED-DIRECTION GATING FAILS
% ============================================================
\vspace{-0.1in}
\section{Why Fixed-Direction Gating Fails}
\label{sec:signal-landscape}
\vspace{-0.1in}
In this section, we first formalize the adaptive gating problem (\S\ref{sec:setup}), then measure signal-utility correlations across diverse environments to test the fixed-direction assumption (\S\ref{sec:empirical-landscape}), and finally model why the assumption fails (\S\ref{sec:toy-model}).

\vspace{-0.1in}
\subsection{Problem Setup}
\label{sec:setup}
We study LLM-based agents that solve tasks through multi-step interaction with an environment. At each step $t$, the agent observes state $s_t$, selects action $a_t$ from its base policy $\pi$, and transitions to $s_{t+1}$. An environment-specific \emph{test time optimizer} $T$ (e.g., rollout evaluation, $K$-variant sampling) can be invoked at any step to potentially improve action quality. Let $\sigma(s)$ denote a scalar gating signal at state $s$ (e.g., token entropy, confidence, disagreement, or a learned difficulty score). We define the \emph{optimizer utility} as:
\begin{equation}
  U(T, s) = \mathbb{E}[R(\tau) \mid a{=}T(s)] - \mathbb{E}[R(\tau) \mid a{=}\pi(s)],
  \label{eq:utility}
\end{equation}
where $R(\tau)$ is the episode return. $U > 0$ indicates the optimizer improves the outcome; $U \leq 0$ indicates it is wasteful or harmful. The \emph{adaptive gating problem} is to learn a gate $g: \mathcal{S} \to \{0, 1\}$ that triggers $T$ only when $U > 0$.

\subsection{Signal-Utility Direction Reverses Across Environments and Backbones}
\label{sec:empirical-landscape}

\begin{wraptable}{r}{0.52\textwidth}
	\vspace{-12pt}
	\caption{Signal-utility correlations across 6 environments and 3 backbones. $\rho$: Spearman of token entropy with utility $U$; direction varies across both. Cons.: sign agreement across backbones (\cmark same, \xmark flip, ---: insufficient). $^*$: $p{<}0.05$.}
%	\vspace{-0.1in}
	\label{tab:signal-discovery}
	\centering\footnotesize
	\setlength{\tabcolsep}{3pt}
	\begin{tabular}{lcccc}
		\toprule
		Environment & Qwen3 & Phi-3.5 & Llama-3.1 & Cons. \\
		\midrule
		HotpotQA   & $-$0.327$^*$ & $+$0.184$^*$ & $-$0.346$^*$ & \xmark \\
		WebShop    & $+$0.133$^*$ & $+$0.335$^*$ & $+$0.287$^*$ & \cmark \\
		FEVER      & $-$0.119$^*$ & $-$0.156$^*$ & $+$0.428$^*$ & \xmark \\
		APPS       & $+$0.317$^*$ & $-$0.024     & $-$0.249$^*$ & \xmark \\
		TWExpress  & $-$0.290$^*$ & $+$0.000     & $+$0.000     & --- \\
		Plancraft  & $-$0.016     & $+$0.000     & $-$0.176$^*$ & --- \\
		\bottomrule
	\end{tabular}
	\vspace{-10pt}
\end{wraptable}

\paragraph{The signal-utility direction is not fixed.}
Table~\ref{tab:signal-discovery} reports Spearman correlations between token entropy and optimizer utility across 6 environments and 3 model backbones~\cite{DBLP:journals/corr/abs-2505-09388, DBLP:journals/corr/abs-2407-21783, DBLP:journals/corr/abs-2404-14219}. The correlation reverses along two axes. First, \emph{the same signal can reverse across backbones with the task held fixed}: among environments with at least two significant nonzero correlations, sign disagreement across backbones is common. The most extreme case is FEVER, which flips from $\rho{=}{-}0.156$ on Phi-3.5 to $\rho{=}{+}0.428$ on Llama-3.1, a sign change driven entirely by the choice of LLM. The signal's semantics are therefore not a property of the environment alone, but of how the agent's prior knowledge interacts with it. Second, \emph{within a single backbone, the direction varies across environments}: on Qwen3, entropy is negatively correlated with utility in information-seeking environments (FEVER, HotpotQA, TWExpress) but positively in code generation (APPS) and web navigation (WebShop), with near-zero correlation in weak-signal environments (Plancraft).

\textbf{Signal identity also varies across environments.}
Beyond direction, the most informative signal itself differs across environments: \texttt{step\_count} dominates in most settings, but WebShop relies on \texttt{num\_available\_actions} instead (Appendix~\ref{app:feature-selection}), and no single signal is reliably informative across all environments (Appendix~\ref{app:auc}).

\textbf{Wrong direction is harmful, not just uninformative.}
This direction mismatch is not benign: a gate that assumes the wrong direction systematically falls \emph{below} the no-trigger baseline. On strong-signal environments, reversing the gate's direction collapses success rate by 23–37 points (Table~\ref{tab:wrong-direction}), making it strictly worse than doing nothing in mismatched (environment, backbone) pairs.

\textbf{The reversal is not an artifact of entropy scales.}
A natural concern is that cross-backbone reversal could reflect incomparable raw entropy scales across tokenizers. We rule this out: per-(environment, backbone) quantile normalization preserves both Spearman rank ordering (trivially) and Pearson sign on the strong-correlation cells (Appendix~\ref{app:reversal-norm}). The reversal is also robust to monotone reward transformations $U \!\to\! \alpha U$ ($\alpha > 0$) and to entropy transformations $\sigma \!\to\! \sigma/T$, $\sigma \!\to\! \sigma^\alpha$, and $\sigma \!\to\! \log\sigma$, ruling out reward parameterization and entropy-pipeline calibration as alternative explanations (Appendix~\ref{app:reversal-alternatives}). Together, these findings suggest that uncertainty reflects latent state structure that varies across (environment, backbone) pairs, which we formalize next.

\vspace{-0.1in}
\subsection{Why Does Direction Reverse? A Two-Source Model}
\label{sec:toy-model}
\vspace{-0.1in}
\textbf{Model.}
We model direction reversal as arising from two coexisting regimes within each environment, not from two disjoint classes of tasks. The following linear form is a regime-level abstraction: it specifies the sign of the signal-utility relationship within each regime, not a mechanistic claim that all states within a regime share a single underlying cause. Operationally, Type~I denotes regimes with a negative within-regime signal-utility slope, regardless of whether the cause is missing evidence, weak grounding, invalid intermediate states, or rollout-induced compounding errors. At each decision step, a state can be of either type:
\begin{itemize}[leftmargin=*,nosep]
\item \textbf{Type~I (intervention-unsuitable):} Utility is \emph{negatively} related to entropy: $U_I(s) \sim -\alpha H(s) + \varepsilon_I$, with $\alpha > 0$ and zero-mean noise $\varepsilon_I$. These are states where the current context does not provide a reliable basis for the rollout optimizer to improve the action; missing evidence in evidence-seeking tasks is one common cause, alongside misleading partial observations, weak grounding, or compounding rollout errors.
\item \textbf{Type~D (decision-difficult):} Utility is \emph{positively} related to entropy: $U_D(s) \sim +\beta H(s) + \varepsilon_D$, with $\beta > 0$ and zero-mean noise $\varepsilon_D$. These are states where uncertainty reflects multiple viable alternatives, so rollouts can productively sample, compare, or verify candidate actions.
\end{itemize}
Here $\alpha$ and $\beta$ are the within-type slopes: they capture how strongly entropy maps to utility within each state type. A real environment $\mathcal{E}$ contains a mixture of both types; let $p_I(\mathcal{E})$ denote the fraction of Type~I states. The marginal entropy-utility correlation across all states is a weighted blend of the two within-type slopes:
\begin{equation}
  \rho(\mathcal{E}) \;\approx\; \beta - (\alpha + \beta)\,p_I(\mathcal{E}).
  \label{eq:direction}
\end{equation}
At the two extremes, $\rho \!=\! +\beta$ when the environment is pure Type~D ($p_I\!=\!0$) and $\rho \!=\! -\alpha$ when it is pure Type~I ($p_I\!=\!1$). In between, $\rho$ crosses zero at $p_I^* = \beta/(\alpha+\beta)$: environments above this threshold show negative $\rho$ (Type~I dominated), those below show positive $\rho$ (Type~D dominated), and those near it show negligible signal as the two effects cancel. This expression captures the sign-level dependence under a simplified linear model and is not intended as an estimator of Spearman $\rho$. 
%The prediction is consistent with the observed Qwen3 pattern in Table~\ref{tab:signal-discovery}: search-based QA (FEVER, HotpotQA, TWExpress) shows negative $\rho$, code generation (APPS) shows positive $\rho$, and weak-signal environments (Plancraft) cluster near zero.

\textbf{Implication: $\sigma$ is not sufficient for optimizer utility.}
Two states with identical $\sigma$ but opposite latent types have opposite-sign utility. An uncertainty-only gate $g(\sigma(s))$ that lacks features distinguishing latent state types therefore cannot guarantee non-negative value across heterogeneous settings (Proposition~\ref{prop:necessity}, proof in Appendix~\ref{app:proof-necessity}). Recovering sufficiency requires augmenting $\sigma$ with features that distinguish the two state types, directly motivating the multi-feature design of \S\ref{sec:method}.

\textbf{Backbone-level reversal is a prediction, not an anomaly.}
The fraction $p_I(\mathcal{E})$ defined above depends on more than just the environment. A state is intervention-unsuitable only relative to what the agent already encodes: the same observation can be Type~I for a model that lacks the relevant knowledge or grounding and Type~D for one that has it. The effective mixture is therefore indexed by the (environment, backbone) pair, not the environment alone. This explains the backbone-level sign flips of Table~\ref{tab:signal-discovery}, most extreme on FEVER between Phi-3.5 and Llama-3.1: backbone changes shift $p_I$, and when they cross $p_I^*$, $\rho$ flips (Appendix~\ref{app:multi-backbone}).

\textbf{Theoretical grounding.}
This direction reversal is a special case of Simpson's paradox~\cite{simpson1951interpretation, DBLP:books/acm/22/Pearl22n}: aggregating subpopulations with opposing within-group trends reverses the aggregate correlation. Our decomposition adapts the epistemic/aleatoric distinction~\cite{der2009aleatory} to adaptive compute: Type~I states often exhibit epistemic-like uncertainty (e.g., information deficit or weak grounding), while Type~D states exhibit aleatoric-like diversity (multiple viable paths). Rather than a post-hoc explanation, this decomposition makes testable predictions: in \S\ref{sec:theory-verification} we derive three (temporal dynamics, cross-environment consistency, and signal identity alignment) and verify each on held-out data.

% ============================================================
% 4. METHOD: DIAL
% ============================================================
\begin{figure*}[t]
	\centering
	\includegraphics[width=\textwidth]{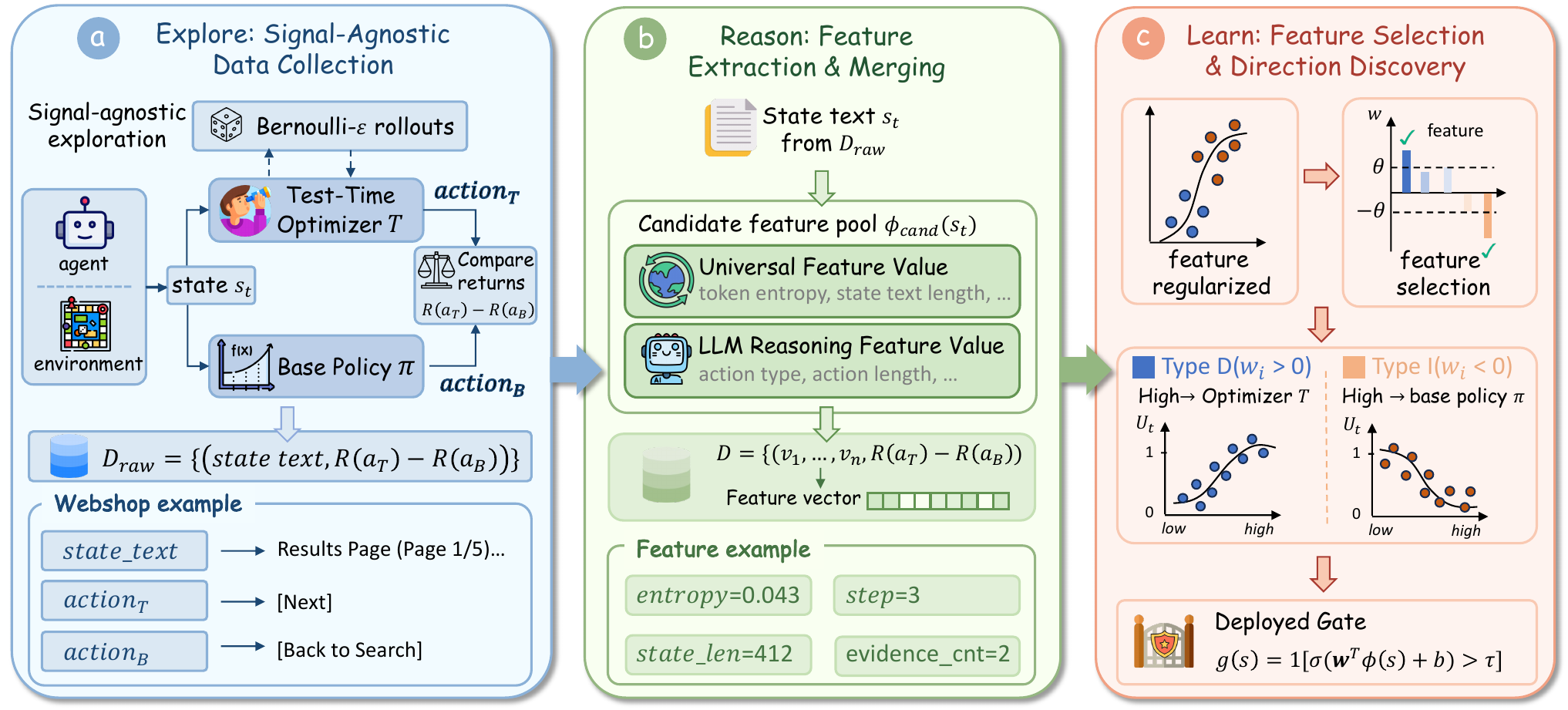} 
    \vspace{-0.2in}
	\caption{\textbf{The DIAL pipeline.}
		\textbf{(a) Explore}: paired counterfactual rollouts via Bernoulli-$\varepsilon$ yield raw data $\mathcal{D}_{\text{raw}}{=}\{(\text{state text}, R(a_T){-}R(a_B))\}$.
		\textbf{(b) Reason}: from each state text, a candidate feature pool $\phi_{\text{cand}}(s)$ combines universal features with LLM-proposed task-specific features, yielding $\mathcal{D}{=}\{(\phi(s), U)\}$.
		\textbf{(c) Learn}: fit an $\ell_1$-regularized logistic gate $g(s)$; signed weights jointly select features and recover per-environment direction (Type~D: $w_i{>}0$; Type~I: $w_i{<}0$).}
	\label{fig:method_overview}
\end{figure*}
\vspace{-0.1in}
\section{Direction-Informed Adaptive Learning}
\label{sec:method}
\vspace{-0.1in}
DIAL reframes gating from difficulty estimation to intervention-utility estimation: it trains on counterfactual optimizer utility rather than difficulty or correctness proxies, and uses signal-agnostic exploration so that the learned gate can discover when each feature indicates positive or negative rollout value instead of inheriting a fixed direction from the data collection rule.
The analysis in \S\ref{sec:signal-landscape} implies three constraints for any cross-setting adaptive gate. It must (i) \emph{discover} the signal-utility direction per (environment, backbone) pair rather than hardcoding it; (ii) draw from a \emph{multi-signal feature pool} that approximates the latent state type $\tau(s)$, since no single signal is reliable across settings; and (iii) collect training data \emph{without conditioning on} the very signal whose direction is unknown, otherwise the data is biased toward the prior assumption. DIAL satisfies these three constraints in three stages (Fig.~\ref{fig:method_overview}): randomized exploration (\S\ref{sec:explore}), feature construction (\S\ref{sec:reason}), and sparse gate fitting (\S\ref{sec:learn}).

\vspace{-0.1in}
\subsection{Explore: Signal-Agnostic Data Collection}
\label{sec:explore}
\vspace{-0.05in}
The exploration stage collects the $(s, U)$ pairs used to fit the gate, under one central constraint: the collection policy must remain independent of any signal whose semantics we are trying to learn. A gate trained on uncertainty-triggered states would systematically under-sample the states that determine $\rho$'s sign, baking the prior assumption into the training data instead of testing it.

DIAL therefore uses the simplest policy consistent with this constraint. At each step~$t$ in episode~$i$, a uniform random coin flip with probability $\varepsilon_{\text{explore}}$ determines whether to trigger the optimizer~$T$ ($z_t^{(i)}{=}1$) or use the base action ($z_t^{(i)}{=}0$). For each step with $z_t^{(i)}{=}1$, we estimate utility by paired counterfactual rollouts from the same state snapshot: we fork the environment, execute both the optimizer's chosen action and the base action via truncated rollouts of horizon $H$, and record which yields a higher return. We set $U_t^{(i)} = 1$ if the optimizer's action wins, and $0$ otherwise; we binarize the continuous utility of Eq.~\eqref{eq:utility} for logistic-gate fitting, which preserves the sign that drives gating decisions. The Bernoulli trigger eliminates selection bias, and the paired design eliminates baseline bias; the remaining approximation is the truncation horizon $H$. Aggregating across $N_{\text{explore}}$ episodes yields the dataset:
\begin{equation}
  \mathcal{D} = \bigl\{
    \bigl(\phi(s_t^{(i)}),\; U_t^{(i)}\bigr)
    : z_t^{(i)} = 1
  \bigr\}_{t, i},
  \label{eq:dataset}
\end{equation}
where $\phi(s)$ is the feature vector. Per-environment estimation details are in Appendix~\ref{app:environment}.

\vspace{-0.15in}
\subsection{Reason: Feature Construction}
\label{sec:reason}
\vspace{-0.05in}
The next stage builds the candidate feature pool $\phi_{\text{cand}}(s)$. This pool must be rich enough to capture each environment's informative signals, but not so environment-specific that deployment requires manual engineering per task.

DIAL's candidate pool combines two layers. The first is a small set of universal features defined identically across environments: \texttt{step\_count}, \texttt{token\_entropy}, \texttt{evidence\_count}, \texttt{num\_available\_actions}, and \texttt{is\_finish} (definitions in Table~\ref{tab:universal-features}). All universal features are computed from the state text via either direct parsing (e.g., \texttt{evidence\_count}) or the agent's existing LLM forward pass (e.g., \texttt{token\_entropy}, \texttt{is\_finish}); no additional data source is required. Signals an environment does not naturally expose (e.g., \texttt{evidence\_count} outside QA-style tasks) default to zero, requiring no per-environment engineering. The second is an LLM-generated layer: an LLM receives a summary of $\mathcal{D}$ and proposes task-specific feature hypotheses $\phi_{\text{LLM}}(s)$, forming $\phi_{\text{cand}}(s) = \phi_{\text{univ}}(s) \cup \phi_{\text{LLM}}(s)$. The $\ell_1$ penalty in the next stage automatically discards uninformative proposals, so the LLM layer can propose freely without degrading gate quality. We show in \S\ref{sec:ablation} that the universal pool alone already outperforms all fixed-direction baselines, and the LLM layer adds further gains in environments where universal features are inadequate. We provide the prompt template and per-environment ablation in Appendix~\ref{app:llm-feature-layer} and Appendix~\ref{app:llm-features}.

\vspace{-0.15in}
\subsection{Learn: Sparse Linear Gate}
\label{sec:learn}
\vspace{-0.05in}

Given $\mathcal{D}$ and $\phi_{\text{cand}}$, DIAL fits an $\ell_1$-regularized logistic regression with the binary cross-entropy loss, where the regularization strength $\lambda$ controls sparsity over $\phi_{\text{cand}}$ so that uninformative features drop out. Let $(\mathbf{w}^*, b^*)$ denote the fitted weights and bias. The deployed gate is
\begin{equation}
  g(s) = \mathbf{1}\!\left[\mathrm{sigmoid}(\mathbf{w}^{*\top} \phi(s) + b^*) > \tau\right],
  \label{eq:gate}
\end{equation}
with threshold $\tau$ set by cross-validation on $\mathcal{D}$. At deployment, the gate requires only a feature evaluation and a single sigmoid: no extra LLM calls, no $K$-way voting, no calibration query. This zero per-step LLM overhead after the one-time exploration phase contrasts with prior gating methods that incur per-step costs (e.g., CATTS triggers $K{=}5$ forward passes per step, AUQ issues a confidence query every step).

\textbf{Signed weights as a diagnostic.}
The $\ell_1$ penalty simultaneously performs feature selection and direction discovery. The sign of each fitted weight provides an interpretable per-environment estimate of how the feature relates to optimizer utility: $w_i > 0$ indicates that feature~$i$ behaves as a decision-difficulty proxy in this setting, where larger values increase expected rollout value; $w_i < 0$ indicates an intervention-unsuitability proxy, where larger values mark states in which rollouts are less likely to help; and $w_i \approx 0$ indicates that the feature is uninformative. The complete pipeline, including online adaptation for drifting environments and detailed cost accounting, is in Appendix~\ref{app:dial-details}.

% ============================================================
% 5. EXPERIMENTS
% ============================================================
\vspace{-0.1in}
\section{Experiments}
\label{sec:experiments}
\vspace{-0.1in}
\subsection{Experimental Setup}
\label{sec:exp-setup}

We evaluate DIAL along three axes: does environment-aware gating improve the success-cost
frontier over fixed-direction baselines
(\S\ref{sec:main-results}); which components drive the gains
(\S\ref{sec:ablation}); and do the Two-Source Model's predictions
hold on held-out data (\S\ref{sec:theory-verification}).

\textbf{Environments.} We conduct evaluations across six environments: 
HotpotQA~\cite{yang2018hotpotqa},
APPS~\cite{hendrycks2021apps},
WebShop~\cite{yao2022webshop},
FEVER~\cite{thorne2018fever},
TWExpress~\cite{jansen2023twexpress}, and
Plancraft~\cite{dagan2024plancraft}. 
% , spanning QA, code generation,
% web navigation, fact verification, text games, and crafting planning.
These cover the full range of the Two-Source
Model, from intervention-unsuitable-leaning (FEVER) through mixed (WebShop) to decision-difficulty-leaning (APPS) settings, including extreme rollout properties
(TWExpress: rollout-safe; Plancraft: rollout-harmful).
Full environment specifications are in
Appendix~\ref{app:environment}.

\vspace{-0.025in}
\textbf{Baselines.} All methods share the same agent and
optimizer $T$; only the gating decision differs.
Six fixed-direction or fixed-budget baselines instantiate a
predetermined compute-allocation rule (high uncertainty, low
confidence, disagreement, or a fixed token budget triggering
additional computation):
CaTS~\cite{cats2025}, SEAG~\cite{seag2025},
CoRefine~\cite{corefine2026}, CATTS~\cite{catts2026},
AUQ~\cite{auq2026}, and s1\_budget~\cite{s1_2025}. Full specifications are in Appendix~\ref{app:method-comparison}. We additionally include two bounds for reference (Appendix~\ref{app:bounds-comparison}): \emph{base\_only}, which never triggers
the optimizer, and \emph{always\_trigger}.

\begin{table*}[t]
\centering
\caption{Full results across 6 environments and 3 backbones. SR (\%) and Cost ($\times$base, total deployment tokens normalized by base\_only).
\colorbox[HTML]{FBE7EB}{\strut Best} and \colorbox[HTML]{E5EDF7}{\strut second-best} mean SR per environment.
$\dagger$: methods requiring calibration data.
Cost includes all LLM calls: base proposer, gate overhead (CATTS $K{=}5$ voting, AUQ confidence query), and rollout calls.}
\vspace{-0.1in}
\label{tab:full-results}
\renewcommand{\arraystretch}{0.9}
\resizebox{\textwidth}{!}{%
\begin{tabular}{l *{6}{rc} }
\toprule
& \multicolumn{2}{c}{\textbf{HotpotQA}} & \multicolumn{2}{c}{\textbf{WebShop}} & \multicolumn{2}{c}{\textbf{FEVER}}
& \multicolumn{2}{c}{\textbf{TWExpress}} & \multicolumn{2}{c}{\textbf{Plancraft}} & \multicolumn{2}{c}{\textbf{APPS}} \\
\cmidrule(lr){2-3}\cmidrule(lr){4-5}\cmidrule(lr){6-7}
\cmidrule(lr){8-9}\cmidrule(lr){10-11}\cmidrule(lr){12-13}
\textbf{Method} & SR$\uparrow$ & Cost$\downarrow$ & SR$\uparrow$ & Cost$\downarrow$ & SR$\uparrow$ & Cost$\downarrow$ & SR$\uparrow$ & Cost$\downarrow$ & SR$\uparrow$ & Cost$\downarrow$ & SR$\uparrow$ & Cost$\downarrow$ \\
\midrule
\multicolumn{13}{c}{\textit{Qwen3-4B-Instruct}} \\
\midrule
CaTS$^\dagger$ & $93.2{\scriptscriptstyle\,\pm\,1.4}$ & 10.54 & $30.5{\scriptscriptstyle\,\pm\,2.2}$ & 3.45 & \cellcolor[HTML]{E5EDF7}$48.2{\scriptscriptstyle\,\pm\,3.6}$ & 25.48 & $96.7{\scriptscriptstyle\,\pm\,0.8}$ & 3.37 & $20.3{\scriptscriptstyle\,\pm\,2.8}$ & 4.14 & $66.2{\scriptscriptstyle\,\pm\,0.0}$ & 2.61 \\
SEAG$^\dagger$ & $67.5{\scriptscriptstyle\,\pm\,4.8}$ & 9.34 & $28.0{\scriptscriptstyle\,\pm\,4.9}$ & 2.84 & $47.3{\scriptscriptstyle\,\pm\,3.2}$ & 17.19 & $97.3{\scriptscriptstyle\,\pm\,0.3}$ & 2.60 & $22.8{\scriptscriptstyle\,\pm\,0.8}$ & 3.52 & $66.0{\scriptscriptstyle\,\pm\,0.0}$ & 2.75 \\
CoRefine$^\dagger$ & $68.2{\scriptscriptstyle\,\pm\,4.9}$ & 9.28 & $27.5{\scriptscriptstyle\,\pm\,3.8}$ & 2.78 & $47.8{\scriptscriptstyle\,\pm\,2.0}$ & 17.19 & \cellcolor[HTML]{E5EDF7}$97.5{\scriptscriptstyle\,\pm\,0.5}$ & 2.57 & $20.8{\scriptscriptstyle\,\pm\,3.0}$ & 3.44 & $67.5{\scriptscriptstyle\,\pm\,0.0}$ & 2.70 \\
CATTS & $68.3{\scriptscriptstyle\,\pm\,4.6}$ & 10.53 & $16.0{\scriptscriptstyle\,\pm\,0.5}$ & 5.55 & $32.2{\scriptscriptstyle\,\pm\,0.3}$ & 16.46 & $97.5{\scriptscriptstyle\,\pm\,0.5}$ & 2.83 & \cellcolor[HTML]{E5EDF7}$23.0{\scriptscriptstyle\,\pm\,2.0}$ & 6.53 & $60.8{\scriptscriptstyle\,\pm\,0.0}$ & 6.00 \\
AUQ & \cellcolor[HTML]{E5EDF7}$95.0{\scriptscriptstyle\,\pm\,0.5}$ & 10.29 & \cellcolor[HTML]{E5EDF7}$35.7{\scriptscriptstyle\,\pm\,8.3}$ & 5.83 & $38.7{\scriptscriptstyle\,\pm\,4.0}$ & 18.04 & $95.5{\scriptscriptstyle\,\pm\,0.0}$ & 1.95 & $22.2{\scriptscriptstyle\,\pm\,1.6}$ & 6.64 & $64.7{\scriptscriptstyle\,\pm\,1.0}$ & 3.28 \\
s1\_budget & $94.0{\scriptscriptstyle\,\pm\,0.5}$ & 8.27 & $17.8{\scriptscriptstyle\,\pm\,2.6}$ & 2.91 & $44.2{\scriptscriptstyle\,\pm\,2.0}$ & 19.21 & $95.0{\scriptscriptstyle\,\pm\,0.5}$ & 2.02 & $16.3{\scriptscriptstyle\,\pm\,2.4}$ & 3.02 & \cellcolor[HTML]{E5EDF7}$69.0{\scriptscriptstyle\,\pm\,1.3}$ & 2.65 \\
\textbf{DIAL} & \cellcolor[HTML]{FBE7EB}$95.2{\scriptscriptstyle\,\pm\,0.3}$ & 8.02 & \cellcolor[HTML]{FBE7EB}$43.8{\scriptscriptstyle\,\pm\,4.9}$ & 2.50 & \cellcolor[HTML]{FBE7EB}$49.8{\scriptscriptstyle\,\pm\,2.5}$ & 16.51 & \cellcolor[HTML]{FBE7EB}$99.0{\scriptscriptstyle\,\pm\,0.3}$ & 1.81 & \cellcolor[HTML]{FBE7EB}$23.3{\scriptscriptstyle\,\pm\,1.5}$ & 3.63 & \cellcolor[HTML]{FBE7EB}$73.0{\scriptscriptstyle\,\pm\,1.0}$ & 2.61 \\
\midrule
\multicolumn{13}{c}{\textit{Phi-3.5-mini-Instruct}} \\
\midrule
CaTS$^\dagger$ & $68.3{\scriptscriptstyle\,\pm\,2.8}$ & 5.86 & $41.7{\scriptscriptstyle\,\pm\,3.8}$ & 2.52 & $19.8{\scriptscriptstyle\,\pm\,2.0}$ & 14.51 & $68.2{\scriptscriptstyle\,\pm\,4.2}$ & 1.00 & $13.5{\scriptscriptstyle\,\pm\,0.5}$ & 1.44 & $30.5{\scriptscriptstyle\,\pm\,1.8}$ & 3.55 \\
SEAG$^\dagger$ & $88.3{\scriptscriptstyle\,\pm\,3.1}$ & 7.82 & $36.5{\scriptscriptstyle\,\pm\,2.8}$ & 2.43 & $13.5{\scriptscriptstyle\,\pm\,3.0}$ & 5.61 & $92.5{\scriptscriptstyle\,\pm\,1.5}$ & 2.58 & $14.7{\scriptscriptstyle\,\pm\,0.8}$ & 2.29 & $27.8{\scriptscriptstyle\,\pm\,0.8}$ & 2.59 \\
CoRefine$^\dagger$ & $87.7{\scriptscriptstyle\,\pm\,2.8}$ & 7.83 & $35.8{\scriptscriptstyle\,\pm\,3.3}$ & 2.44 & $13.7{\scriptscriptstyle\,\pm\,3.3}$ & 5.73 & $92.7{\scriptscriptstyle\,\pm\,1.5}$ & 2.56 & \cellcolor[HTML]{E5EDF7}$15.2{\scriptscriptstyle\,\pm\,1.3}$ & 2.24 & $28.5{\scriptscriptstyle\,\pm\,0.3}$ & 2.60 \\
CATTS & $42.0{\scriptscriptstyle\,\pm\,1.5}$ & 7.98 & $28.7{\scriptscriptstyle\,\pm\,1.9}$ & 4.86 & $12.7{\scriptscriptstyle\,\pm\,1.3}$ & 7.65 & $86.7{\scriptscriptstyle\,\pm\,2.3}$ & 4.00 & $13.5{\scriptscriptstyle\,\pm\,0.0}$ & 5.96 & $27.7{\scriptscriptstyle\,\pm\,0.0}$ & 6.10 \\
AUQ & $31.2{\scriptscriptstyle\,\pm\,1.4}$ & 5.47 & $37.0{\scriptscriptstyle\,\pm\,2.0}$ & 3.11 & $8.5{\scriptscriptstyle\,\pm\,2.3}$ & 3.18 & $92.5{\scriptscriptstyle\,\pm\,1.5}$ & 1.82 & $12.7{\scriptscriptstyle\,\pm\,0.3}$ & 2.24 & $27.2{\scriptscriptstyle\,\pm\,0.3}$ & 2.03 \\
s1\_budget & \cellcolor[HTML]{E5EDF7}$91.2{\scriptscriptstyle\,\pm\,2.9}$ & 5.26 & \cellcolor[HTML]{E5EDF7}$43.0{\scriptscriptstyle\,\pm\,4.5}$ & 3.89 & \cellcolor[HTML]{E5EDF7}$20.0{\scriptscriptstyle\,\pm\,5.4}$ & 8.45 & \cellcolor[HTML]{E5EDF7}$93.7{\scriptscriptstyle\,\pm\,0.8}$ & 2.05 & $13.7{\scriptscriptstyle\,\pm\,0.8}$ & 1.73 & \cellcolor[HTML]{E5EDF7}$34.5{\scriptscriptstyle\,\pm\,0.3}$ & 2.93 \\
\textbf{DIAL} & \cellcolor[HTML]{FBE7EB}$92.3{\scriptscriptstyle\,\pm\,2.8}$ & 5.38 & \cellcolor[HTML]{FBE7EB}$57.3{\scriptscriptstyle\,\pm\,1.3}$ & 2.56 & \cellcolor[HTML]{FBE7EB}$20.5{\scriptscriptstyle\,\pm\,3.0}$ & 7.44 & \cellcolor[HTML]{FBE7EB}$96.7{\scriptscriptstyle\,\pm\,0.8}$ & 1.83 & \cellcolor[HTML]{FBE7EB}$16.8{\scriptscriptstyle\,\pm\,0.8}$ & 1.33 & \cellcolor[HTML]{FBE7EB}$36.8{\scriptscriptstyle\,\pm\,2.0}$ & 2.87 \\
\midrule
\multicolumn{13}{c}{\textit{Llama-3.1-8B-Instruct}} \\
\midrule
CaTS$^\dagger$ & $94.2{\scriptscriptstyle\,\pm\,1.3}$ & 7.21 & $36.0{\scriptscriptstyle\,\pm\,6.3}$ & 2.51 & \cellcolor[HTML]{FBE7EB}$56.3{\scriptscriptstyle\,\pm\,2.3}$ & 14.82 & $35.8{\scriptscriptstyle\,\pm\,5.4}$ & 3.00 & $22.8{\scriptscriptstyle\,\pm\,0.3}$ & 3.42 & $55.2{\scriptscriptstyle\,\pm\,1.3}$ & 4.81 \\
SEAG$^\dagger$ & $94.0{\scriptscriptstyle\,\pm\,2.0}$ & 9.82 & $27.5{\scriptscriptstyle\,\pm\,3.9}$ & 1.86 & $53.5{\scriptscriptstyle\,\pm\,2.9}$ & 12.57 & \cellcolor[HTML]{E5EDF7}$76.5{\scriptscriptstyle\,\pm\,0.9}$ & 3.85 & $21.8{\scriptscriptstyle\,\pm\,3.5}$ & 2.70 & \cellcolor[HTML]{E5EDF7}$57.2{\scriptscriptstyle\,\pm\,1.6}$ & 4.41 \\
CoRefine$^\dagger$ & $94.2{\scriptscriptstyle\,\pm\,1.5}$ & 9.42 & $27.5{\scriptscriptstyle\,\pm\,3.8}$ & 1.82 & $53.5{\scriptscriptstyle\,\pm\,3.3}$ & 12.64 & $76.0{\scriptscriptstyle\,\pm\,0.0}$ & 3.56 & $21.5{\scriptscriptstyle\,\pm\,2.2}$ & 2.69 & $57.2{\scriptscriptstyle\,\pm\,1.3}$ & 4.37 \\
CATTS & $94.0{\scriptscriptstyle\,\pm\,2.0}$ & 11.11 & $11.3{\scriptscriptstyle\,\pm\,3.8}$ & 4.74 & $14.2{\scriptscriptstyle\,\pm\,2.8}$ & 17.00 & $49.3{\scriptscriptstyle\,\pm\,0.0}$ & 4.11 & \cellcolor[HTML]{FBE7EB}$27.0{\scriptscriptstyle\,\pm\,2.9}$ & 7.31 & $52.3{\scriptscriptstyle\,\pm\,1.0}$ & 6.70 \\
AUQ & \cellcolor[HTML]{E5EDF7}$95.0{\scriptscriptstyle\,\pm\,1.5}$ & 10.54 & \cellcolor[HTML]{E5EDF7}$37.8{\scriptscriptstyle\,\pm\,4.3}$ & 4.54 & $52.8{\scriptscriptstyle\,\pm\,2.3}$ & 25.08 & $63.2{\scriptscriptstyle\,\pm\,0.9}$ & 3.58 & $27.0{\scriptscriptstyle\,\pm\,1.5}$ & 8.21 & $46.0{\scriptscriptstyle\,\pm\,0.5}$ & 6.11 \\
s1\_budget & $94.5{\scriptscriptstyle\,\pm\,0.5}$ & 8.08 & $11.2{\scriptscriptstyle\,\pm\,0.5}$ & 1.87 & $13.8{\scriptscriptstyle\,\pm\,1.0}$ & 8.56 & $76.1{\scriptscriptstyle\,\pm\,3.0}$ & 3.05 & $26.3{\scriptscriptstyle\,\pm\,0.8}$ & 2.53 & $52.5{\scriptscriptstyle\,\pm\,0.8}$ & 2.33 \\
\textbf{DIAL} & \cellcolor[HTML]{FBE7EB}$95.5{\scriptscriptstyle\,\pm\,1.7}$ & 7.07 & \cellcolor[HTML]{FBE7EB}$41.7{\scriptscriptstyle\,\pm\,4.5}$ & 1.68 & \cellcolor[HTML]{E5EDF7}$54.7{\scriptscriptstyle\,\pm\,3.2}$ & 18.69 & \cellcolor[HTML]{FBE7EB}$94.8{\scriptscriptstyle\,\pm\,5.1}$ & 2.55 & \cellcolor[HTML]{E5EDF7}$27.0{\scriptscriptstyle\,\pm\,3.4}$ & 2.49 & \cellcolor[HTML]{FBE7EB}$59.7{\scriptscriptstyle\,\pm\,1.8}$ & 4.29 \\
\bottomrule
\end{tabular}%
}
\end{table*}

\vspace{-0.025in}
% \textbf{Metrics.} We use SR (success rate, $\uparrow$) and Cost
% ($\times$base, $\downarrow$): total deployment tokens normalized by
% base\_only tokens per episode. This metric counts \emph{all} LLM
% calls, including base proposer, gate overhead, and rollout calls, ensuring that per-step
% overhead is not hidden. Calibration or exploration overhead is
% excluded for all methods.
% A method Pareto-dominates another if $\text{SR} \geq$ and
% $\text{Cost} \leq$ with at least one strict inequality.
% Token cost constants are reported in
% Appendix~\ref{app:cost}.
\textbf{Metrics.} We use SR (success rate, $\uparrow$) and Cost
($\times$base, $\downarrow$), total deployment tokens normalized by
base\_only per episode and including all LLM calls (base proposer, gate
overhead, rollout); calibration and exploration overhead are excluded
for all methods. A method Pareto-dominates another if
$\text{SR}\geq$ and $\text{Cost}\leq$ with at least one strict inequality.
Token cost constants are in Appendix~\ref{app:cost}.
\vspace{-0.15in}
\subsection{Main Results}
\label{sec:main-results}
\vspace{-0.1in}

\textbf{DIAL improves the success-cost frontier over fixed-direction baselines.}
Table~\ref{tab:full-results} reports SR and Cost ($\times$base) across all 6 environments and 3 backbones; the Qwen3-4B Pareto frontier is visualized in Appendix~\ref{app:multi-backbone}. DIAL
achieves the highest SR in 16 of 18 (environment, backbone) cells. Three insights follow.
First, \emph{wrong direction is catastrophic, not merely costly}.
On Qwen3-4B FEVER, an intervention-unsuitable-leaning environment, CATTS pays 16.46$\times$ base for $K{=}5$ voting yet reaches only 32.2\% SR, below the 37.0\% no-trigger baseline: fixed-direction gating in the wrong direction
does not merely waste compute, it actively hurts.
Second, \emph{learning the right direction yields a better SR-cost trade-off, not just a retuned threshold}. On Qwen3-4B HotpotQA, DIAL trades 1.8 SR points for 1.3$\times$ less cost than always\_trigger (95.2 vs.\ 97.0\% at 8.02$\times$ vs.\ 10.63$\times$); on WebShop, DIAL matches always\_trigger at less than half the cost. The gap is too large
to close by retuning thresholds without changing direction.
Third, \emph{the win generalizes across backbones}. The same DIAL pipeline takes 6/6 cells on both Qwen3-4B and Phi-3.5-mini and 4/6 on Llama-3.1-8B, even though entropy--utility correlations flip sign across model families (\S\ref{sec:empirical-landscape}). Fixed-direction baselines instead break per-backbone: AUQ on HotpotQA falls from 95.0\% (Qwen3-4B) to 31.2\% (Phi-3.5).

% FEVER remains DIAL's weakest environment: while DIAL achieves the
% highest SR (49.8\%), its cost (16.51$\times$base) is also the
% highest, a limitation we analyze in \S\ref{sec:discussion}.
\vspace{-0.15in}
\subsection{Ablation Studies}
\label{sec:ablation}
\vspace{-0.1in}
The main results show that DIAL works; we now ask
\emph{why}. We ablate direction along two axes: its sign
(direction reversal) and the gate's capacity to exploit it
(gate complexity). We then characterize how DIAL's trigger
rate adapts across rollout-headroom regimes and explain why fewer
rollouts can improve accuracy. LLM feature and
regularizer ablations appear in Appendix~\ref{app:llm-features}
and Appendix~\ref{app:regularizer-ablation}.

\begin{wraptable}{r}{0.52\textwidth}
\vspace{-2pt}
\caption{Effect of reversing DIAL's learned direction.
$\Delta$SR: reversed $-$ original. $|\rho^*|$ is the
absolute Spearman correlation between the dominant signal
and optimizer utility in each environment. Degradation
scales with $|\rho^*|$.}
\vspace{-.05in}
\label{tab:wrong-direction}
\centering
\resizebox{0.5\textwidth}{!}{%
\begin{tabular}{lcccc}
\toprule
Environment & DIAL SR & Reversed SR & $\Delta$ SR & $|\rho^*|$ \\
\midrule
FEVER          & 49.8\% & 13.0\% & $-$36.8 & 0.619 \\
HotpotQA       & 95.2\% & 58.2\% & $-$37.0 & 0.494 \\
WebShop        & 43.8\% & 20.6\% & $-$23.2 & 0.444 \\
APPS           & 73.0\% & 70.5\% & $-$2.5  & 0.339 \\
TWExpress      & 99.0\% & 96.2\% & $-$2.8  & 0.290 \\
Plancraft      & 23.3\% & 20.5\% & $-$2.8  & 0.016 \\
\bottomrule
\end{tabular}%
}
\vspace{-10pt}
\end{wraptable}
\vspace{-0.025in}
\textbf{Direction is a first-order determinant of gate quality.}
To isolate the effect of direction, we reverse the sign of all learned weights in DIAL's fitted gate while keeping everything
else unchanged, constructing an adversarially wrong-direction gate that preserves ranking structure but inverts trigger
semantics. Table~\ref{tab:wrong-direction} shows that on strong-signal environments with $|\rho^*| > 0.4$,
reversal collapses SR by 23--37\%; on weak-signal Plancraft ($|\rho^*|{=}0.016$) the effect is negligible. The damage scales with signal strength: wrong direction is not a calibration error but a structural failure. This instantiates the
wrong-direction harm result of Appendix~\ref{app:wrong-direction}: a sharper gate fires precisely on the states where
computation has negative value.

\begin{wrapfigure}{r}{0.45\textwidth}
\vspace{-15pt}
\centering
\includegraphics[width=0.43\textwidth]{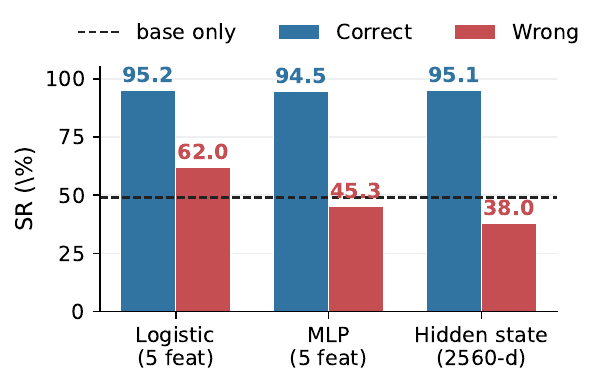}
\vspace{-0.1in}
\caption{Gate complexity ablation on HotpotQA (Qwen3-4B). With
correct direction all gates reach $\approx$95\%; with
wrong direction.}
\label{fig:capacity}
\vspace{-10pt}
\end{wrapfigure}
\vspace{-0.025in}
\textbf{Gate capacity does not substitute for direction.}
A natural objection is that DIAL's gains come from gate simplicity rather than learned direction. With correct
direction, a logistic gate, a 2-layer MLP, and a hidden-state probe (2560-d) all reach $\approx$95\% SR on HotpotQA: capacity adds $<$1\% (Figure~\ref{fig:capacity}). With wrong direction, the ordering inverts: the MLP drops to 45.3\%, \emph{below} the wrong-direction logistic at 62.0\%. Direction consistently outweighs capacity, which is why DIAL's minimal linear gate suffices.

\vspace{-0.025in}
\textbf{Trigger rate adapts to rollout headroom.}
In rollout-harmful environments, baselines that fire aggressively pay a price beyond efficiency: on Plancraft
($\Delta{=}{-}7.0$), s1\_budget keeps triggering and falls below \emph{base\_only} (16.3\% vs.\ 29.8\%, see
Appendix~\ref{app:bounds-comparison}); more compute makes performance \emph{worse} than no gate at all. DIAL avoids this trap without ever observing the headroom: its trigger rate adapts from 73\% on rollout-safe TWExpress, where almost any
trigger helps, down to $<$20\% on Plancraft, with a within-episode decay from 49\% at step~0 to under 20\% in late
steps as the gate learns that further rollouts only add harm. This adaptive magnitude emerges from the per-environment signed weights alone: direction discovery already encodes whether the environment rewards or penalizes additional computation. The full per-step analysis is in Appendix~\ref{app:trigger-rate}.

\vspace{-0.025in}
\textbf{Why fewer rollouts can improve SR.}
The optimizer is not a monotone improvement operator. In intervention-unsuitable states, rollout evaluation starts from the same unreliable context as the base policy and can amplify misleading evidence, invalid intermediate states, or compounding trajectory errors. Avoiding such negative-utility interventions therefore improves accuracy, not merely efficiency. DIAL's gain in low-headroom regimes comes from learning when \emph{not} to invoke the optimizer, while still triggering in decision-difficult states where rollouts expose useful alternatives.

\vspace{-0.15in}
\subsection{Two-Source Model Verification}
\vspace{-0.1in}
\label{sec:theory-verification}
\begin{wrapfigure}{r}{0.5\textwidth}
	\vspace{-33pt}
	\centering
	\includegraphics[width=0.43\textwidth]{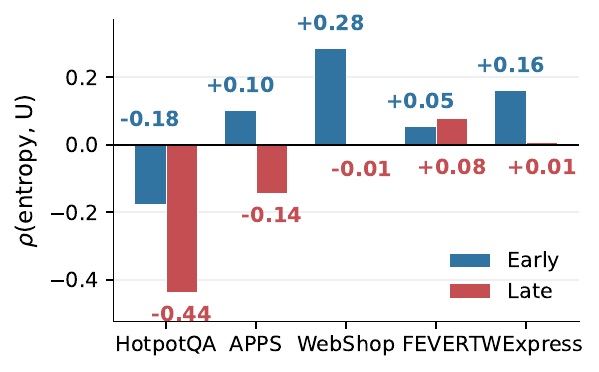}
	\vspace{-0.1in}
	\caption{P1: $\rho$(entropy, $U$) shifts from early (blue,
		step $\leq$ median) to late (red); bars show Spearman $\rho$.
		Plancraft omitted ($|\rho|{=}0.016$).}
	\label{fig:temporal-shift}
	\vspace{-10pt}
\end{wrapfigure}
The empirical landscape (\S\ref{sec:empirical-landscape}) shows
direction varies but does not pin down the Two-Source Model
(\S\ref{sec:toy-model}) as the cause. We test the model with
three complementary experiments that each rule out a different
class of alternative explanation: within-episode temporal
dynamics (P1, observational), within-task causal manipulation
of the information mixture (P2, interventional), and
cross-environment alignment of the dominant signal with the
dominant state type (P3, population-level).

\textbf{P1: Temporal dynamics.}
The Two-Source Model predicts that $\rho$ should decrease over the episode: as the agent gathers information, Type~D states are resolved and late-step uncertainty increasingly reflects residual Type~I states.
Figure~\ref{fig:temporal-shift} supports this pattern in environments with strong temporal signal. In HotpotQA, $\rho$ shifts from $-0.176$ (early) to $-0.437$ (late). In
WebShop, the strong positive early-phase signal ($\rho{=}{+}0.285$) vanishes in the late phase. In APPS, $\rho$ shifts from non-significant positive ($+$0.102) to weak negative ($-$0.144). Full results are in
Appendix~\ref{app:temporal}.

%\vspace{-0.05in}
\begin{wraptable}{r}{0.5\textwidth}
\vspace{-12pt}
\caption{Controlled information manipulation on HotpotQA
(Qwen3-4B): availability shifts which signal dominates,
providing causal evidence for the Two-Source Model.}
\vspace{-0.05in}
\label{tab:controlled}
\centering\footnotesize
\setlength{\tabcolsep}{4pt}
\begin{tabular}{lcccc}
\toprule
Condition & Base SR & $\rho$(ent.) & $\rho$(step) & DIAL SR \\
\midrule
Original  & 49.0\% & $-$0.041 & $-$0.494 & 95.2\% \\
InfoPoor  & 44.0\% & $+$0.119 & $\mathbf{-0.608}$ & 94.0\% \\
InfoRich  & 82.0\% & $\mathbf{+0.311}$ & $-$0.147 & 88.3\% \\
\bottomrule
\end{tabular}
\vspace{-10pt}
\end{wraptable}
\textbf{P2: Controlled information manipulation.} P1 verifies a within-episode dynamic; we now intervene on $p_I$ directly for \emph{causal} evidence. We construct two
interventions on HotpotQA: \textbf{InfoPoor} restricts retrievable evidence to a small passage subset (increasing the Type~I mixture), and \textbf{InfoRich} prepends gold-standard evidence to the prompt (reducing one major source of intervention-unsuitability, namely information poverty, and shifting states toward Type~D). Table~\ref{tab:controlled}
reports the result: restricting evidence makes \texttt{step\_count} dominant ($\rho{=}{-}0.608$) while entropy
is weak; injecting gold evidence makes entropy dominant ($\rho{=}{+}0.311$) while \texttt{step\_count} weakens. As the
only test in this section that intervenes on the latent mixture rather than observing it, P2 confirms that signal identity is a function of information structure.
%, not an
%environment-specific constant.
%\vspace{-0.05in}

\begin{wraptable}{r}{0.6\textwidth}
\vspace{-12pt}
\caption{Strongest signal per environment aligns with the
dominant regime: information-sufficiency proxies in Type~I,
decision-complexity in Type~D.}
\vspace{-0.05in}
\label{tab:signal-identity}
\centering\footnotesize
\setlength{\tabcolsep}{3pt}
\begin{tabular}{lllcl}
\toprule
\textbf{Env} & \textbf{Regime} & \textbf{Strongest Signal} & $|\rho|$ & \textbf{Interpretation} \\
\midrule
HotpotQA   & Type~I & step\_count        & 0.494 & Info accumulation \\
FEVER      & Type~I & step\_count        & 0.619 & Info accumulation \\
TWExpress  & Type~I & step\_count        & 0.477 & Exploration depth \\
WebShop    & Mixed  & num\_avail\_actions & 0.444 & Decision-space size \\
APPS       & Type~D & step\_count        & 0.339 & Debugging depth \\
Plancraft  & Weak   & has\_output        & 0.162 & Rollout harmful \\
\bottomrule
\end{tabular}
\vspace{-10pt}
\end{wraptable}
\textbf{P3: Signal identity alignment.}
P2 manipulated information within one task; P3 asks whether the
predicted alignment also holds \emph{across} tasks. The
Two-Source Model further predicts that the strongest signal in
each environment should reflect its dominant state type:
information-sufficiency proxies in Type~I, decision-complexity
proxies in Type~D. Table~\ref{tab:signal-identity} confirms this: \texttt{step\_count} dominates in Type~I environments,
while \texttt{num\_available\_actions} and other choice-related features dominate in WebShop. The LASSO coefficient matrix and its interpretation as Two-Source proxies are in Appendix~\ref{app:feature-selection}.

The three tests are observational, interventional, and
population-level respectively, and rule out distinct confounds.
Cross-setting $\rho$ consistency at fixed backbone is reported
in Appendix~\ref{app:cross-setting}, and multi-backbone
verification in Appendix~\ref{app:multi-backbone} provides
additional robustness checks.

% Tables tab:controlled and tab:signal-identity moved inline as wraptables in P2 and P3.

\section{Limitations}
\label{sec:discussion}

FEVER is DIAL's weakest setting: useful rollouts concentrate in the earliest steps, which random exploration may miss, and on one backbone a fixed-direction baseline matches or slightly exceeds DIAL on the SR-cost frontier. Curriculum-based exploration is a natural extension. DIAL incurs a one-time exploration cost per (environment, backbone), amortized over subsequent inference; in exchange, it removes the need for manual direction tuning. We exclude end-to-end RL gating controllers, whose reward-cost objective and online training can confound direction-attribution.

% ============================================================
% 7. CONCLUSION
% ============================================================

\section{Conclusion}

Adaptive gating for LLM agents is not difficulty estimation: it requires learning when the available rollout optimizer has positive counterfactual utility. We show that common fixed-direction gates fail in heterogeneous agent settings: the signal--utility relationship can reverse across environments and model backbones, and wrong-direction gates can worsen performance by selecting harmful states. We explain this reversal through a two-source model that distinguishes compute need from compute suitability: intervention-unsuitable and decision-difficult states give the same signal opposite meanings. The proposed DIAL learns the utility direction per (environment, backbone) setting from interaction data, using a sparse feature-based gate to achieve a stronger overall success-cost trade-off than fixed-direction baselines across six environments and three backbones.

% ============================================================
% REFERENCES
% ============================================================
\newpage
\bibliographystyle{unsrtnat}
\bibliography{references}
\newpage
% ============================================================
% APPENDIX
% ============================================================
\appendix

% ============================================================
% A. EXTENDED RELATED WORK
% ============================================================
\section{Extended Related Work}
\label{app:related}

\subsection{Adaptive Compute: Reasoning vs.\ Agent Settings}

The adaptive test-time compute literature spans two distinct
settings. In the \emph{reasoning} setting, methods optimize
single-turn chain-of-thought generation on homogeneous benchmarks
(MATH, GSM8K, GPQA).
Signal-based approaches use token-level confidence or entropy to
decide when to extend
thinking~\cite{seag2025, cats2025, corefine2026,
thinkjustenough2025, s1_2025, han2025tokenbudget}.
RL-based methods learn think/no-think policies through reinforcement
learning~\cite{adaptthink2025, thinkless2025, aggarwal2025l1}.
Hidden-state probing methods train lightweight classifiers on
internal representations to estimate task
difficulty~\cite{diffadapt2025, zhu2025llmalreadyknows},
sometimes assuming a universal U-shaped entropy pattern
that our cross-(environment, backbone) evidence disproves.
Because reasoning benchmarks share similar task semantics, signal
direction happens to be consistent across tasks, and the
fixed-direction assumption is never tested.

In the \emph{agent} setting, methods operate in multi-step
interactive environments with external optimizers. Vote-based
methods~\cite{catts2026, corefine2026} sample multiple candidate
actions and trigger evaluation when disagreement is high.
Verbalized-uncertainty methods~\cite{auq2026} use self-reported
confidence for agent reflection but assume fixed direction.
Episode-level approaches~\cite{paglieri2025learning} learn when
to invoke a planner, and agentic RL methods~\cite{arpo2025} use
entropy-based adaptive rollout at tool-call steps. All agent-setting
methods inherit the fixed-direction assumption from the reasoning
literature without verifying it. Our work is the first to
systematically study signal semantics across heterogeneous agent
environments \emph{and backbones}, where we find that the
direction reverses along both axes.

\subsection{Adaptive Computation in Neural Networks}

The idea of adapting computation to input difficulty predates LLMs.
Adaptive Computation Time~\cite{graves2016act} allows RNNs to learn
how many computational steps to take per input. Early exit
methods~\cite{teerapittayanon2016branchynet, elhoushi2024layerskip}
attach classifiers at intermediate transformer layers and exit early
for easy inputs. Mixture-of-Experts
architectures~\cite{shazeer2017moe, raposo2024mixtureofdepths}
activate only a subset of parameters per token, adapting capacity
rather than depth. These methods adapt computation within a single
forward pass. Our work operates at a different granularity: we
adapt whether to invoke an external optimizer at each decision step,
and the key challenge is not input difficulty but the
\emph{direction} of the signal-utility relationship.

\subsection{Confidence Calibration vs.\ Utility Calibration}
\label{app:calibration}

\paragraph{Classical confidence calibration.}
\citet{kadavath2022knowwhat} show that larger models are
well-calibrated on multiple-choice tasks and can estimate P(True)
for their own outputs.
\citet{tao2025revisiting} and \citet{heo2025llmuncertainty} find
that calibration quality varies substantially across task types,
providing converging evidence that uncertainty semantics are
domain-dependent.
\citet{yadkori2024believe} propose decoupling epistemic and
aleatoric uncertainty in LLM outputs, a distinction conceptually
related to our Type~I/Type~D decomposition.

\paragraph{From confidence-to-correctness to confidence-to-utility.}
These works ask whether confidence is calibrated to the
\emph{correctness} of an action. Adaptive gating, however,
requires a different alignment: whether a confidence-derived
signal predicts the \emph{value of additional computation}. We
call this latter target \emph{utility calibration}. The
distinction matters because even a perfectly
correctness-calibrated confidence may be insufficient for
gating: a gate triggers only when extra computation improves
the outcome, which depends on the environment and the
optimizer, not just on whether the current action is right.

\paragraph{Our contribution.}
We extend these findings in two directions. First, we make the
target distinction explicit: gating is a utility-calibration
problem, not a correctness-calibration one. Second, we show
that this strengthens the domain-dependence picture
empirically: not only does calibration \emph{quality} vary
across tasks, but the \emph{sign} of the signal-to-utility
relationship reverses across (environment, backbone) pairs, so
a gate using correctness-calibrated confidence under the wrong
utility direction will systematically select harmful states.

\subsection{Metareasoning and Value of Computation}

Rational metareasoning~\cite{russell1991right,
horvitz1987reasoning} formalizes the value of computation (VOC)
as a guide for allocating cognitive resources. The classical
framework assumes VOC~$\geq 0$ because an agent can always ignore
computation results. \citet{lieder2020resource} generalize this to
resource-rational analysis, modeling cognition as optimal use of
limited computational resources.
\citet{desabbata2024metareasoning} apply metareasoning to LLMs,
analyzing when to extend chain-of-thought reasoning.
Our work shows that the classical VOC~$\geq 0$ assumption is
violated in the agent setting; we formalize this in
Appendix~\ref{app:voc}.

\subsection{Simpson's Paradox in Machine Learning}
\label{app:simpson}

Simpson's paradox~\cite{simpson1951interpretation,
DBLP:books/acm/22/Pearl22n} occurs when a trend present in subgroups
reverses upon aggregation. In machine learning, this phenomenon has
been documented in fairness~\cite{kdd2023simpson}, causal
inference, and treatment effect estimation. Our two-source model
identifies a new instance: within Type~I states, entropy negatively
predicts utility; within Type~D states, entropy positively predicts
utility; but the marginal correlation depends on the mixture
proportion $p_I$, which is itself an (environment, backbone)-indexed
quantity, since a state's status as Type~I or Type~D depends on what the
agent already encodes, so the same environment can sit on opposite
sides of the reversal threshold $p_I^*$ under different LLMs. This
makes direction reversal a structural consequence of aggregating
heterogeneous uncertainty sources, and predicts both the
across-environment and across-backbone reversals reported in
Table~\ref{tab:signal-discovery}.

\subsection{Search-Based Optimizers}

A growing body of work targets \emph{what} the test-time compute
procedure does, rather than \emph{when} it should be invoked.
Tree-search and lookahead methods organize multi-step
exploration through value propagation and AlphaZero-style
rollouts~\cite{yao2023tree,zhou2024lats,hao2023reasoning,wang2026flare,feng2024alphazero,inoue2025abmcts}.
Repeated independent sampling with majority aggregation
exploits smooth coverage gains as the sample budget
grows~\cite{wang2023selfconsistency,brown2024monkeys}, while
verifiers trained over intermediate reasoning steps rerank
candidates or steer
search~\cite{lightman2024lets,wang2024mathshepherd}. At a
higher level, compute-allocation analyses and reasoning models
study how a fixed budget should be distributed across these
mechanisms or absorbed into long latent
traces~\cite{snell2024scaling,deepseek2025r1}.
Closer to our setting, agent-specific work adapts these
mechanisms to interactive, multi-step environments by applying
tree search to web tasks~\cite{koh2024tree}, coupling MCTS with
preference learning over agent
trajectories~\cite{putta2024agentq}, and characterizing scaling
behavior for agent
rollouts~\cite{DBLP:journals/corr/abs-2506-12928,huang2026evolverouter,DBLP:journals/corr/abs-2510-05445,ma2025autodata}.
All of this work specifies the rollout procedure that runs at a
triggered step. Our contribution is orthogonal: a per-step gate
decides \emph{whether} such a procedure should run, and the gate
composes with any of the above as a drop-in test-time optimizer.

% ============================================================
% B. THEORETICAL MATERIAL
% ============================================================
\section{Theoretical Material}
\label{app:theory}

This appendix collects the theoretical material referenced in
Section~\ref{sec:signal-landscape}: the formal statement and proof
of the necessity-of-direction-discovery result
(\S\ref{app:proof-necessity}), the wrong-direction damage
quantification (\S\ref{app:wrong-direction}), and the VOC
negativity analysis (\S\ref{app:voc}).

\subsection{Insufficiency of Uncertainty Signals: Statement and Proof}
\label{app:proof-necessity}

We formalize the central theoretical claim of the paper: any
adaptive-compute policy that depends on an uncertainty signal
$\sigma$ alone cannot deliver non-negative value of computation
across heterogeneous (environment, backbone) settings.

\begin{proposition}[Insufficiency of Uncertainty Signals]
\label{prop:necessity}
Let $\sigma : \mathcal{S} \to \mathbb{R}$ be any uncertainty
signal (e.g., token entropy, calibrated confidence, vote
disagreement, verbalized uncertainty). Let
$\mathrm{VOC}(s) =
 \mathbb{E}[R(\tau)\mid a{=}T(s)] -
 \mathbb{E}[R(\tau)\mid a{=}\pi(s)]$
denote the value of computation at state $s$.
Then $\sigma$ is \emph{not a sufficient statistic} for
$\mathrm{VOC}$ in the following sense: there exist (environment,
backbone) settings $\mathcal{E}_1, \mathcal{E}_2$ and states
$s_1 \in \mathcal{E}_1$, $s_2 \in \mathcal{E}_2$ with
$\sigma(s_1) = \sigma(s_2)$ such that
$\mathrm{VOC}(s_1) > 0$ and $\mathrm{VOC}(s_2) < 0$.
Consequently, no policy of the form
$\pi_\sigma(s) = g(\sigma(s))$ for any measurable
$g : \mathbb{R} \to \{0, 1\}$ can simultaneously satisfy
$\mathrm{SR}(\pi_\sigma, \mathcal{E}_1) \geq
 \mathrm{SR}(\mathrm{base}, \mathcal{E}_1)$ and
$\mathrm{SR}(\pi_\sigma, \mathcal{E}_2) \geq
 \mathrm{SR}(\mathrm{base}, \mathcal{E}_2)$.
\end{proposition}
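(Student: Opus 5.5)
The plan is to prove the statement by explicit construction, using the Two-Source Model of \S\ref{sec:toy-model} as a template. Since the claim is existential, it suffices to exhibit two settings that realize a Type~D state and a Type~I state at the same signal value. I would use the simplest possible settings: one-step (bandit-like) environments, so that $\mathrm{VOC}$ equals the change in success probability at the single decision.

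\textbf{Construction.} Fix any value $\sigma_0$ in the range of $\sigma$.
\begin{itemize}[leftmargin=*,nosep]
\item $\mathcal{E}_1$ (Type~D) has a single state $s_1$ with $\sigma(s_1)=\sigma_0$. The base policy $\pi$ picks a suboptimal action with success probability $p$. The optimizer $T$ compares candidates through rollouts against a reliable evaluator and picks an action with success probability $q>p$. Hence $\mathrm{VOC}(s_1)=q-p>0$.
\item $\mathcal{E}_2$ (Type~I) has a single state $s_2$ with $\sigma(s_2)=\sigma_0$. Here the rollout evaluator is driven by the same misleading context, so $T$ selects an action with success probability $q'<p'$, where $p'$ is the base policy's success probability. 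Hence $\mathrm{VOC}(s_2)<0$. This matches the paper's observation that $\mathrm{VOC}\ge 0$ fails when the same model both proposes and evaluates actions.
\end{itemize}
Because $\sigma$ is only a function on states, nothing prevents assigning it the same value in both settings. For the signals the statement lists, this is realized concretely by giving the two states identical next-token distributions for the policy, which fixes entropy, confidence and vote disagreement alike. This proves the non-sufficiency claim. A multi-step version could embed these states at any step, but it is not needed.

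\textbf{Consequence.} Let $g$ be any measurable gate and let $\pi_\sigma=g\circ\sigma$. Since every state in both environments has signal value $\sigma_0$, the gate's behavior is determined entirely by the single bit $g(\sigma_0)$.
\begin{itemize}[leftmargin=*,nosep]
\item If $g(\sigma_0)=1$, then $\mathrm{SR}(\pi_\sigma,\mathcal{E}_2)=q'<p'=\mathrm{SR}(\mathrm{base},\mathcal{E}_2)$.
\item If $g(\sigma_0)=0$, then $\pi_\sigma$ coincides with the base policy on both environments.
\end{itemize}

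\textbf{Main obstacle.} The second case is where the statement as literally written runs into trouble. The trivial gate $g\equiv 0$ meets both non-strict inequalities, so the conclusion cannot be proved verbatim. I would therefore prove the intended, correctly quantified version: no $g$ can both strictly improve on $\mathcal{E}_1$ (realize the available positive VOC) and not degrade $\mathcal{E}_2$. Equivalently, every $\sigma$-only gate is weakly dominated in at least one setting by a gate that also sees the type of the state. The case analysis above gives exactly this: one branch strictly degrades $\mathcal{E}_2$, and the other forgoes the gain $q-p$ on $\mathcal{E}_1$.

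It is also worth noting that mixing the two state types inside a single environment would make this a within-environment impossibility as well. That is the Simpson's-paradox reading of \eqref{eq:direction}.
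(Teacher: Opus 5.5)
Your argument is correct, and so is your diagnosis of the statement: $g\equiv 0$ reproduces the base policy, so it meets both non-strict inequalities in any pair of settings, and the policy claim must be weakened before it can be proved. The first half of your proof matches the paper's idea (a Type~D and a Type~I state sharing a value $\sigma_0$), though you build the states explicitly rather than invoking the linear two-source model. The consequence is where the two routes genuinely differ. The paper writes the expected gain of $g\circ\sigma$ as $-\alpha\,p_I\,\mathbb{E}_I[g(\sigma)\sigma]+\beta(1-p_I)\,\mathbb{E}_D[g(\sigma)\sigma]+\mathrm{const}$. It then argues that for any gate ``non-trivially correlated with $\sigma$'' one can pick $\mathcal{E}_1,\mathcal{E}_2$ with $p_I$ on opposite sides of $p_I^*=\beta/(\alpha+\beta)$, so that the gain changes sign. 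That route ties the result to the mixture proportion and the Simpson's-paradox reading, and it covers environments containing both types and gates that genuinely vary with $\sigma$. However, it relies on the linear form (with $\sigma_0>0$), chooses the settings after $g$ (the reverse of the quantifier order in the statement), and excludes the trivial gate only through the informal ``informative gate'' caveat. Your single-state environments are degenerate ($\sigma$ is constant on each), so they say nothing about gates that exploit within-environment variation of $\sigma$. In exchange you get a model-free and fully rigorous argument: one fixed pair of settings defeats every measurable $g$ simultaneously, and the reduction to the single bit $g(\sigma_0)$ is exact. The repair (strict gain on $\mathcal{E}_1$ versus no loss on $\mathcal{E}_2$) is also stated explicitly rather than left implicit. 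One tightening: since $\sigma$ is given rather than chosen, matching next-token distributions does not by itself fix a verbalized-confidence query or a realized vote count. Instead, let $s_1$ and $s_2$ present identical context to the agent and differ only in hidden dynamics or rewards visible to $T$'s rollouts, so that every model-derived signal coincides automatically.
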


The proposition strengthens the earlier ``no fixed-direction
gate works'' formulation in two ways. First, it rules out
\emph{any} measurable $g(\sigma)$, not just monotone or
threshold-based policies, so non-monotone or U-shaped
calibrators are also covered. Second, it identifies the
underlying obstruction as an information-theoretic property of
$\sigma$ itself, one that survives any post-processing of the
signal.

\begin{proof}
By the two-source decomposition of \S\ref{sec:toy-model},
$\mathrm{VOC}(s)$ depends on a latent type
$\tau(s) \in \{I, D\}$ that is not measurable from $\sigma(s)$
alone:
\[
  \mathrm{VOC}(s) \;=\;
  \begin{cases}
    -\alpha\,\sigma(s) + \varepsilon_I & \text{if } \tau(s) = I,\\
    +\beta\,\sigma(s) + \varepsilon_D & \text{if } \tau(s) = D,
  \end{cases}
\]
with $\alpha, \beta > 0$. Choose any signal value
$\sigma_0$ in the support of both Type~I and Type~D states
within their respective settings. Then there exist
$s_1 \in \mathcal{E}_1$ with $\tau(s_1) = D$ and
$\sigma(s_1) = \sigma_0$, giving
$\mathbb{E}[\mathrm{VOC}(s_1)\mid \sigma_0] = \beta \sigma_0 > 0$
for sufficiently large $\sigma_0$, and
$s_2 \in \mathcal{E}_2$ with $\tau(s_2) = I$ and
$\sigma(s_2) = \sigma_0$, giving
$\mathbb{E}[\mathrm{VOC}(s_2)\mid \sigma_0] =
-\alpha \sigma_0 < 0$. Hence the conditional VOC is not a
function of $\sigma$ alone, establishing insufficiency.

For the policy claim, fix any $g : \mathbb{R} \to \{0, 1\}$.
The expected one-step reward gain over base from playing
$\pi_\sigma$ is
$\mathbb{E}_{s \sim \mathcal{E}}[g(\sigma(s)) \cdot \mathrm{VOC}(s)]$.
Decomposing by latent type and writing $p_I(\mathcal{E})$ for
the Type~I fraction,
\begin{align*}
&\mathbb{E}_{\mathcal{E}}[g(\sigma) \cdot \mathrm{VOC}]
= -\alpha\, p_I(\mathcal{E})\, \mathbb{E}_I[g(\sigma)\sigma] \\
&\qquad + \beta\,(1 - p_I(\mathcal{E}))\, \mathbb{E}_D[g(\sigma)\sigma] + \mathrm{const}.
\end{align*}
Whenever $g$ is non-trivially correlated with $\sigma$ (which
any informative gate must be), the two terms have opposite
signs as $p_I$ varies. In particular, for any fixed $g$ there
exist $\mathcal{E}_1, \mathcal{E}_2$ with $p_I(\mathcal{E}_1)$
and $p_I(\mathcal{E}_2)$ on opposite sides of the reversal
threshold $p_I^* = \beta/(\alpha+\beta)$ of \S\ref{sec:toy-model}
such that the expected gain has opposite signs. The setting
where it is negative violates the SR constraint relative to
$\mathrm{base}$, completing the argument.
\end{proof}

\paragraph{Corollary (necessity of direction discovery).}
The original ``no fixed-direction gate works'' result is the
restriction of Proposition~\ref{prop:necessity} to threshold
policies $g(\sigma) = \mathbf{1}[d \cdot \sigma > \theta]$,
$d \in \{+1, -1\}$. In that case both cases of the proof above
collapse to: $g$'s sign of correlation with $\sigma$ is fixed at
$d$, but $\mathrm{VOC}$'s sign of correlation with $\sigma$
flips with $p_I$, so on the wrong side of $p_I^*$ the gate
triggers exactly when $\mathrm{VOC} < 0$ and incurs systematic
harm.

\paragraph{What recovers sufficiency.}
The proposition pinpoints what an adaptive-compute method must
condition on: any feature $\phi(s)$ that is informative about
the latent type $\tau(s)$ recovers a sufficient statistic for
$\mathrm{VOC}$, since
$\mathrm{VOC}(s) = f(\sigma(s), \tau(s))$ is by construction a
function of $(\sigma, \phi)$. DIAL's multi-feature gate
(\S\ref{sec:method}) is the minimal procedure that operationalizes
this recovery: it augments $\sigma$ with auxiliary universal
features that empirically correlate with $\tau$ (e.g.,
\texttt{step\_count} as a proxy for information accumulation,
\texttt{num\_available\_actions} as a proxy for decision
optionality) and lets the data identify which of them carry
information about $\tau$ in each (environment, backbone) setting.

\subsection{Wrong-Direction Damage: The Calibration-Worsens
  Mechanism}
\label{app:wrong-direction}

Table~\ref{tab:wrong-direction} is the empirical statement of
the headline claim: a gate with reversed weights is, by
construction, a more precisely calibrated \emph{harm-targeting}
mechanism, and the resulting SR collapse scales with signal
strength $|\rho^*|$. The decomposition predicts the relationship
quantitatively. By the wrong-direction lemma in
App~\ref{app:proof-necessity},
\[
  \mathrm{SR}(g_{-d^*}, \mathcal{E}) -
  \mathrm{SR}(\mathrm{base}, \mathcal{E})
  \;\approx\; -\,c \cdot |\rho^*(\mathcal{E})|,
\]
where $c$ is a constant that depends on the optimizer's harm
magnitude. Empirically this scaling is clean: HotpotQA
($|\rho^*|{=}0.494$) loses 37.0\%, FEVER ($|\rho^*|{=}0.619$)
loses 36.8\%, WebShop ($|\rho^*|{=}0.444$) loses 23.2\%,
while Plancraft ($|\rho^*|{=}0.016$) loses only 2.8\%; the
slope of damage in $|\rho^*|$ is the slope of ranking
sharpness, since sharper-ranking gates concentrate their firing
on higher-$|\sigma|$ states which, when the direction is wrong,
are exactly the harmful ones.

Two environments show only minimal damage under reversal, both
consistent with the same prediction (small $|\rho^*|$ or
rollout-safe optimizer):

\paragraph{TWExpress is rollout-safe ($\Delta{=}{+}31.8$).}
The optimizer rarely harms performance regardless of when it is
triggered, so both directions achieve near-perfect SR (99.0\%).

\paragraph{APPS shows a small negative $\Delta$SR ($-2.5\%$).}
This is consistent with its unstable signal direction across
backbones (Table~\ref{tab:signal-discovery}): Qwen3 shows
positive $\rho$, Llama-3.1 shows negative $\rho$, and Phi-3.5 is
non-significant. When the signal is this unstable, direction
reversal has minimal impact and small fluctuations can favor
either direction.

\subsection{VOC Non-Negativity Scope}
\label{app:voc}

In the classical VOC framework~\cite{russell1991right},
$\mathrm{VOC}(T, s) =
 \mathbb{E}[\max(V(a_T), V(a_{\mathrm{base}}))]
 - V(a_{\mathrm{base}}) \geq 0$
because the agent can \emph{disregard} the computation result if
unfavorable. Our setting violates this: when the agent uses an
evaluator/optimizer $T$, the evaluator's assessment \emph{is} the
agent's decision. The agent cannot separately evaluate the
evaluator's output because the evaluator \emph{is} the evaluation
mechanism.

Under evaluator-executor identity, VOC can be negative:
$\mathrm{VOC}(T, s) =
 \mathbb{E}[V(a_T(s))] - V(a_{\mathrm{base}}(s))$.
If $T$ systematically selects worse actions in Type~I states,
$\mathrm{VOC} < 0$ for those states. This explains why
wrong-direction gating is not merely wasteful
($\mathrm{VOC} = 0$) but actively harmful
($\mathrm{VOC} < 0$), and why \emph{sharper-ranking} gates
fire more precisely on these $\mathrm{VOC} < 0$ states, so
ranking improvements amplify rather than mitigate the harm
(Table~\ref{tab:wrong-direction} and
App~\ref{app:wrong-direction}). The classical VOC framework's
$\mathrm{VOC} \geq 0$ guarantee fails for adaptive compute in
agent settings precisely because the agent cannot ``disregard''
the optimizer's output, and any policy that depends only on
$\sigma$ (which cannot identify Type~I states) will trigger
$T$ on a non-trivial fraction of $\mathrm{VOC} < 0$ states.

% ============================================================
% C. DIAL IMPLEMENTATION DETAILS
% ============================================================
\section{DIAL Implementation Details}
\label{app:dial-details}

This appendix expands on DIAL's implementation: hyperparameters
and training details (\S\ref{app:hyperparams-details}), universal
feature definitions (\S\ref{app:universal-features}), the optional
LLM feature layer (\S\ref{app:llm-feature-layer}), online
adaptation for drifting environments (\S\ref{app:online-adaptation}),
and the LLM prompt template (\S\ref{app:llm-prompts}).

\begin{algorithm}[h]
	\caption{DIAL: Direction-Informed Adaptive Learning}
	\label{alg:dial}
	\small
	\begin{algorithmic}[1]
		\Require Environment $\mathcal{E}$, agent policy $\pi$, optimizer
		$T$, universal feature set $\phi_{\text{univ}}$, LLM
		$\mathcal{M}$, exploration budget $N_{\text{explore}}$,
		exploration rate $\varepsilon_{\text{explore}}$,
		gate threshold $\tau$
		\Ensure Deployed gate $g: \mathcal{S} \to \{0, 1\}$

		\Statex \textbf{// Phase 1: Explore}
		\State $\mathcal{D} \gets \emptyset$
		\For{episode $= 1, \dots, N_{\text{explore}}$}
		  \State Run $\pi$ in $\mathcal{E}$; at each step $t$, invoke $T$
		  with probability $\varepsilon_{\text{explore}}$
		  independent of state
		  \State Record $(s_t, U_t)$ pairs into $\mathcal{D}$
		\EndFor

		\Statex \textbf{// Phase 2: Reason}
		\State $(p_{\mathcal{E}}, \mathcal{H}) \gets
		\mathcal{M}(\mathrm{summary}(\mathcal{D}))$
		\Comment{env profile + feature hypotheses}
		\State $\phi_{\text{cand}} \gets \phi_{\text{univ}} \cup
		\mathrm{extract}(\mathcal{H})$
		\State Recompute $\phi_{\text{cand}}(s)$ for all
		$s \in \mathcal{D}$

		\Statex \textbf{// Phase 3: Learn}
		\State Standardize $\phi_{\text{cand}}$ on $\mathcal{D}$
		\State Select $C$ via 5-fold CV on held-out log-loss
		\State Fit $\ell_1$-regularized logistic regression:
		\begin{equation}
		\mathbf{w}^*, b^* = \argmin_{\mathbf{w}, b}\
		\sum_{(\phi, U) \in \mathcal{D}}
		\ell\bigl(\sigma(\mathbf{w}^\top \phi + b),\, U\bigr)
		+ \tfrac{1}{C} \|\mathbf{w}\|_1
		\label{eq:lasso-full}
		\end{equation}
		\State $\phi \gets$ features with non-zero $w_i^*$
		\State $g(s) \gets
		\mathbf{1}[\sigma(\mathbf{w}^{*\top} \phi(s) + b^*) > \tau]$

		\State \Return $g$
	\end{algorithmic}
\end{algorithm}

\subsection{Hyperparameters and Training Details}
\label{app:hyperparams-details}

Phase~1 runs for $N_{\text{explore}} = 50$ episodes with
$\varepsilon_{\text{explore}} = 0.5$. Before fitting in Phase~3,
all features in $\phi_{\text{cand}}$ are standardized to zero mean
and unit variance on $\mathcal{D}$, ensuring that the interpretable
weight signs discussed in Section~\ref{sec:learn} are not
confounded by scale differences across heterogeneous features.
We induce sparsity using the $\ell_1$-regularized logistic
regression of Eq.~\eqref{eq:lasso-full}, with $\ell$ the binary
cross-entropy loss and regularization strength $C$ selected by
5-fold cross-validation over
$C \in \{0.01, 0.03, 0.1, 0.3, 1.0, 3.0, 10.0\}$, optimizing
held-out log-loss. Features with zero coefficient after fitting
are dropped from the deployed gate. The decision threshold $\tau$
in Eq.~\eqref{eq:gate} is fixed to $0.5$ for all environments.

\subsection{Universal Feature Definitions}
\label{app:universal-features}

Table~\ref{tab:universal-features} defines the universal features
available in every environment.

\begin{table}[H]
\centering\small
\caption{Universal feature definitions. All features are
computed identically across environments; signals an
environment does not naturally expose default to zero.}
\label{tab:universal-features}
\begin{tabular}{ll}
\toprule
Feature & Definition \\
\midrule
\texttt{step\_count}            & Current step index $t$ in the episode \\
\texttt{token\_entropy}         & Entropy of the next-token distribution \\
\texttt{evidence\_count}        & Count of evidence/information items collected (QA/search envs) \\
\texttt{num\_available\_actions}& Count of currently-available actions (navigation envs) \\
\texttt{is\_finish}             & Whether the proposed action is the episode-terminating ``finish'' \\
\bottomrule
\end{tabular}
\end{table}

\paragraph{Additional feature categories used only in the
regularizer ablation.} The expanded candidate pool of
Appendix~\ref{app:regularizer-ablation} additionally includes
three \emph{derived} features (\texttt{step\_ratio}~$=$~step\_count$/$max\_steps,
\texttt{entropy\_sq}~$=$~token\_entropy$^{2}$, and
\texttt{step\_x\_entropy}~$=$~step\_count~$\times$~token\_entropy),
20 \emph{hidden-state PCA} components extracted from the model's
internal representations, and three \emph{auto-extracted}
features parsed from the state text (\texttt{state\_length},
\texttt{num\_numbers}, and \texttt{has\_error}). The main DIAL
configuration uses only the universal features above plus the
LLM-generated layer (Appendix~\ref{app:llm-feature-layer}).

\subsection{LLM Feature Layer}
\label{app:llm-feature-layer}

For environments where universal features may be inadequate,
DIAL's optional LLM feature layer automates the proposal of
task-specific signals. The LLM receives a structured summary of
the exploration dataset $\mathcal{D}$, including: (1)~aggregate
statistics (total episodes, steps, optimizer trigger rate, overall
positive-utility fraction), (2)~per-step breakdowns of trigger
rate and utility by step index, and (3)~representative examples
of states where the optimizer was triggered, paired with their
utility labels (positive vs.\ negative).

Given this summary, the LLM is prompted to analyze patterns that
distinguish positive-utility from negative-utility states and to
produce a Python function that extracts exactly 5 task-specific
features from the raw state text. Each feature must return a
float value using only the Python standard library. The full
prompt template is shown in Figure~\ref{fig:llm-prompt-template}
(\S\ref{app:llm-prompts}).

The LLM-proposed features $\phi_{\text{LLM}}(s)$ are combined
with the universal features to form the candidate pool
$\phi_{\text{cand}}(s) = \phi_{\text{univ}}(s) \cup
\phi_{\text{LLM}}(s)$. The $\ell_1$-regularized gate
(\S\ref{sec:learn}) then selects or discards each feature based
on its predictive value for rollout utility. This two-stage
design (LLM proposes, LASSO disposes) ensures that uninformative
LLM features are automatically filtered out. As shown in
Table~\ref{tab:llm-features}, environments with richer
task-specific structure (WebShop, TWExpress) retain more LLM
features, while Plancraft retains none.

\subsection{Online Adaptation}
\label{app:online-adaptation}

For drifting environments, DIAL can optionally refit during
deployment using $\varepsilon$-greedy exploration
($\varepsilon: 0.1 \to 0$) and periodic retraining. During
deployment, $\varepsilon_{\text{deploy}}$ decays from $0.1$
to $0$ over the first 100 episodes, and the gate is refit every
$k = 30$ episodes using the accumulated $\varepsilon$-greedy
samples. Refitting reuses the same solver configuration as the
initial training, so no additional hyperparameter tuning is
required at deployment time.

This mode is \emph{not free}: $\varepsilon$-greedy overrides
ignore the current gate, so in Type-I-dominant environments a
fraction of overridden steps will trigger harmful rollouts. For stationary
environments, which are the setting of our main experiments,
we recommend skipping online adaptation and relying on the
offline-fitted gate.

\subsection{LLM Reasoning Prompts}
\label{app:llm-prompts}

Figure~\ref{fig:llm-prompt-template} shows the full LLM prompt
template. We use WebShop as the example, where LLM features
contribute most meaningfully.

\begin{figure}[h]
  \centering
  \begin{tcolorbox}[
      title=\textbf{LLM Feature Layer Prompt Template},
      colframe=PTGreenDark,
      colback=PTLightGreen,
      colbacktitle=PTGreenDark,
      coltitle=white,
      fonttitle=\bfseries,
      boxrule=0.9pt,
      arc=0pt,
      left=7pt,right=7pt,top=7pt,bottom=7pt,
      width=\columnwidth,
      enhanced
    ]
    \ttfamily\footnotesize
    You are an AI agent that has been exploring an interactive environment. During exploration, you sometimes performed ``rollouts'' (additional computation) to improve your decisions. Sometimes the rollout was useful (positive utility), sometimes it was not.\\[0.6ex]
    Your task: Analyze the patterns in your experience and write a Python function that extracts features from the state text that could predict whether a rollout would be useful.\\[0.6ex]
    \textnormal{\textit{[Exploration statistics and representative examples provided]}}\\[0.6ex]
    Requirements:\\
    - Return a dict with string keys and float values.\\
    - Use only Python standard library.\\
    - Extract exactly 5 features based on the patterns you observe.\\
    - Focus on features that DIFFER between useful and not-useful rollout cases.
  \end{tcolorbox}
  \caption{Prompt template used in the LLM feature layer (Phase~2) to propose task-specific features from exploration data.}
  \label{fig:llm-prompt-template}
\end{figure}

% ============================================================
% D. EXPERIMENTAL SETUP
% ============================================================
\section{Experimental Setup}
\label{app:experiments}

\subsection{Environment Specifications}
\label{app:environment}

We evaluate on 6 environments spanning diverse agent tasks.
Table~\ref{tab:env-setup} summarizes their base performance,
optimizer headroom, and optimizer type.

\textbf{HotpotQA~\cite{yang2018hotpotqa}.}
A multi-hop question answering task where the agent must retrieve
and reason over multiple Wikipedia paragraphs to answer complex
questions. The agent iteratively issues search queries and reads
returned passages before producing a final answer.

\textbf{WebShop~\cite{yao2022webshop}.}
A web navigation task where the agent shops for a product matching
a natural language instruction. The agent navigates a simulated
e-commerce website by searching, clicking product links, and
selecting attributes (size, color) before purchasing.

\textbf{FEVER~\cite{thorne2018fever}.}
A fact verification task where the agent must classify a claim as
\emph{Supported}, \emph{Refuted}, or \emph{Not Enough Info} based
on evidence retrieved from Wikipedia. The agent searches for
relevant passages and reasons over retrieved evidence to produce
a verdict.

\textbf{APPS~\cite{hendrycks2021apps}.}
A code generation task where the agent solves competitive
programming problems. Given a problem description with input/output
examples, the agent generates a Python solution and can iteratively
debug based on test case feedback.

\textbf{TWExpress~\cite{jansen2023twexpress}.}
A text-based game where the agent navigates a simulated environment
to complete household tasks (e.g., finding and heating objects).
The agent issues natural language commands (go, open, take, use)
and observes textual descriptions of the resulting state.

\textbf{Plancraft~\cite{dagan2024plancraft}.}
A crafting planning task where the agent must produce a target item
by executing a sequence of crafting recipes in a Minecraft-inspired
environment. The agent selects recipes and manages inventory to
reach the goal item.

These environments cover the full range of the Two-Source Model:
FEVER and HotpotQA are dominated by Type~I states (information
poverty), WebShop and APPS are dominated by Type~D states
(decision difficulty), TWExpress is rollout-safe
($\Delta{=}{+}31.8$), and Plancraft is rollout-harmful
($\Delta{=}{-}7.0$).

All three optimizer types rank candidates by environment-provided
reward; they differ along two axes: what is varied (single action /
full program / full action sequence) and how the rollout is
aggregated (mean return of truncated rollouts / test-case pass /
full-episode reward).
\begin{itemize}[leftmargin=*,nosep]
\item \textbf{Per-action evaluation} (HotpotQA, FEVER, TWExpress,
  Plancraft). At the gated step, take the top-$K{=}5$ candidate
  actions proposed by the base policy. For each candidate $a$, fork
  the environment snapshot, execute $a$, and continue for $H{-}1{=}2$
  further steps with the same LLM proposer ($\tau{=}0.7$), giving a
  truncated rollout of length $H{=}3$. Repeat $N{=}5$ independent
  rollouts per candidate and set
  $V(a) = \tfrac{1}{N}\sum_{n=1}^{N} R_n(a)$, where $R_n(a)$ is the
  env-provided score for the $n$-th truncated trajectory (task-specific:
  e.g., F1 for QA, episode return for games). Execute
  $a^{*} = \arg\max_{a} V(a)$.
\item \textbf{$K$-variant sampling} (APPS). Sample $K{=}3$ complete
  code solutions and execute the first one that passes the provided
  test cases (objective check via the interpreter).
\item \textbf{LLM-Propose-$K$} (WebShop). The LLM proposes $K{=}3$
  full action sequences; each is simulated end-to-end in the
  environment, and the trajectory with the highest observed
  episodic reward is executed.
\end{itemize}
\paragraph{Utility estimation.}
In all environments, the utility label $U_t$ used for gate
training (\S\ref{sec:explore}) is computed by paired
counterfactual evaluation from the same state snapshot.
For per-action evaluation environments, the base policy's
proposed action $a_B$ is included in the top-$K$ candidate set
and evaluated with the same $N{\times}H$ rollout protocol as all
other candidates; utility is
$U_t = \mathbf{1}[V(a^*) > V(a_B)]$, where
$V(a) = \tfrac{1}{N}\sum_{n=1}^{N} R_n(a)$.
For $K$-variant sampling (APPS), the base solution is evaluated
on the same test suite as the $K$ variants, and
$U_t = \mathbf{1}[\max_k \text{pass\_rate}_k > \text{pass\_rate}_{\text{base}}]$.
Because all candidates are evaluated from identical initial
conditions, this is a counterfactual estimate rather than an
on-policy observation. The truncation horizon $H$ introduces
approximation bias relative to full-episode returns; we find
results stable for $H \in [3, 10]$.

In all three cases, the optimizer itself has access to the env's
scoring function; our method is about learning \emph{when} this
optimizer is worth invoking, not how it scores internally.

\begin{table}[h!]
\caption{Environment setup. Base/Always: SR without/with optimizer.
$\Delta$: rollout headroom.}
\label{tab:env-setup}
\centering\small
\begin{tabular}{lcccl}
\toprule
Environment & Base SR & Always SR & $\Delta$ & Optimizer $T$ \\
\midrule
HotpotQA       & 49.0\% & 97.0\% & +48.0   & Per-action eval \\
WebShop        &  7.2\% & 43.0\% & +35.8   & LLM-Propose-$K$ \\
FEVER          & 37.0\% & 99.8\% & +62.8   & Per-action eval \\
TWExpress      & 67.5\% & 99.3\% & +31.8   & Per-action eval \\
Plancraft      & 29.8\% & 22.8\% & $-$7.0  & Per-action eval \\
APPS           & 60.5\% & 79.5\% & +19.0   & $K$-variant sampling \\
\bottomrule
\end{tabular}
\end{table}

\subsection{Detailed Baseline Comparison}
\label{app:method-comparison}

Table~\ref{tab:baseline-detail} provides a detailed comparison of
all baselines evaluated in this paper. All six methods share a
common assumption: a fixed mapping from signal value to compute
need.

\begin{table*}[h!]
\caption{Detailed comparison of baselines. All methods assume a
fixed signal-utility direction.
$^\dagger$Requires a separate labeled dataset for calibration.}
\label{tab:baseline-detail}
\centering\small
\renewcommand{\arraystretch}{1.15}
\begin{tabularx}{\textwidth}{l W W N W}
\toprule
Method & Signal & Direction & Granularity & Mechanism \\
\midrule
CATTS~\cite{catts2026} & Vote entropy + margin
  & High disagreement $\to$ trigger
  & Per-step & $K{=}5$ forward passes, arbiter on disagreement \\
SEAG$^\dagger$~\cite{seag2025} & Mean token confidence
  & Low conf.\ $\to$ search
  & Per-problem & Platt scaling, search depth allocation \\
CaTS$^\dagger$~\cite{cats2025} & Calibrated confidence
  & Low conf.\ $\to$ generate
  & Per-attempt & Self-calibration, Best-of-N early stopping \\
CoRefine$^\dagger$~\cite{corefine2026} & Conv1D confidence
  & Low conf.\ $\to$ refine
  & Per-problem & Halt / re-examine / redirect controller \\
s1\_budget~\cite{s1_2025} & Token budget
  & Fixed budget $\to$ stop
  & Per-problem & Budget-constrained generation \\
AUQ~\cite{auq2026} & Verbalized uncertainty
  & High uncert.\ $\to$ reflect
  & Per-step & Uncertainty-aware memory + reflection \\
\midrule
\textbf{DIAL} & \textbf{Multi (auto)} & \textbf{Learned}
  & \textbf{Per-step}
  & Explore + reason + direction learning \\
\bottomrule
\end{tabularx}
\end{table*}

\subsection{Computational Cost Breakdown}
\label{app:cost}

Table~\ref{tab:cost-components} breaks down the per-method cost
structure. Cost is reported as $\times$base: total
\emph{deployment} tokens normalized by base\_only tokens per
episode. Calibration or exploration overhead is excluded for all
methods, ensuring a fair comparison of runtime efficiency. This
metric captures hidden per-step overhead that raw rollout counts
miss: CATTS incurs $K{=}5$ forward passes every step regardless
of the gating decision, and AUQ issues a confidence query every
step. Consistent with the cost accounting of \S\ref{sec:method},
DIAL incurs a one-time exploration cost of
$N_{\text{explore}}{=}50$ episodes per (environment, backbone)
deployment and sub-second offline gate fitting; at deployment the
gate adds zero per-step inference cost beyond the feature
evaluation and a single sigmoid: no extra LLM call, no $K$-way
voting, and no calibration query.

\begin{table}[H]
\centering
\caption{Computational cost breakdown per method.
Gate OH = additional LLM calls per step beyond the base proposer.
Cost ($\times$base) = total \emph{deployment} tokens normalized by
base\_only tokens per episode, averaged across 6 environments.
Calibration/exploration overhead is excluded for all methods.
$^\dagger$Requires separate calibration data.}
\label{tab:cost-components}
\small
%\resizebox{\columnwidth}{!}{%
\begin{tabular}{lclc}
\toprule
\textbf{Method} & \textbf{Gate OH/step} & \textbf{OH source} & \textbf{Avg $\times$b} \\
\midrule
base\_only        & 0          & ---                    & 1.00 \\
always\_trigger   & 0          & ---                    & 6.00 \\
\midrule
CATTS             & $+K{=}5$ calls & Voting (fwd pass)  & 7.98 \\
SEAG$^\dagger$    & 0          & Reads logprob          & 6.37 \\
CoRefine$^\dagger$& 0          & Reads entropy          & 6.33 \\
CaTS$^\dagger$    & 0          & Reads confidence       & 8.26 \\
AUQ               & $+1$ query & Confidence question    & 7.67 \\
s1\_budget        & 0          & ---                    & 6.35 \\
\midrule
\textbf{DIAL}     & \textbf{0} & \textbf{Learned gate} & \textbf{5.85} \\
\bottomrule
\end{tabular}%
%}
\end{table}

\subsection{DIAL vs.\ Bounds: Full Numbers}
\label{app:bounds-comparison}

Table~\ref{tab:bounds-comparison} reports DIAL's SR and Cost
against the two bounds (\emph{base\_only}, \emph{always\_trigger})
across all 6 environments on Qwen3-4B. In environments with
positive rollout headroom (HotpotQA, WebShop, TWExpress, APPS),
DIAL achieves SR close to always\_trigger at substantially lower
cost. In the rollout-harmful environment Plancraft
($\Delta{=}{-}7.0$), DIAL learns to suppress triggering, losing
only 6.5\% relative to base\_only while always\_trigger loses
7.0\%. FEVER remains the weakest case: DIAL improves over
base\_only (49.8\% vs.\ 37.0\%) but falls far short of
always\_trigger (59.8\%), a limitation analyzed in
\S\ref{sec:discussion}.

\begin{table}[h!]
\caption{DIAL vs.\ bounds on Qwen3-4B. $\Delta$: rollout headroom
(always$-$base).}
\label{tab:bounds-comparison}
\centering\small
\begin{tabular}{lcccccc}
\toprule
& \multicolumn{2}{c}{\textbf{base\_only}}
& \multicolumn{2}{c}{\textbf{always\_trigger}}
& \multicolumn{2}{c}{\textbf{DIAL}} \\
\cmidrule(lr){2-3}\cmidrule(lr){4-5}\cmidrule(lr){6-7}
\textbf{Env} & SR$\uparrow$ & Cost$\downarrow$ & SR$\uparrow$ & Cost$\downarrow$ & SR$\uparrow$ & Cost$\downarrow$ \\
\midrule
HotpotQA  & 49.0 & 1.00 & 97.0 & 10.63 & 95.2 & 8.02 \\
WebShop   &  7.2 & 1.00 & 43.0 &  5.56 & 43.8 & 2.50 \\
FEVER     & 37.0 & 1.00 & 59.8 &  17.87 & 49.8 & 16.51 \\
TWExpress & 67.5 & 1.00 & 99.3 &  2.15 & 99.0 & 1.81 \\
Plancraft & 29.8 & 1.00 & 22.8 &  6.13 & 23.3 & 3.63 \\
APPS      & 60.5 & 1.00 & 79.5 &  3.65 & 73.0 & 2.61 \\
\bottomrule
\end{tabular}
\end{table}

\subsection{Compute Resources}
\label{app:compute}

All experiments are run on NVIDIA A100 GPUs with 40\,GB of memory.
Each backbone (Qwen3-4B, Llama-3.1-8B-Instruct, Phi-3.5-mini-instruct)
fits on a single A100 in BF16, and we serve the model with vLLM for
batched inference. The full set of reported experiments covers
6 environments $\times$ 3 backbones $\times$ 3 seeds, plus the
DIAL exploration phase ($N_{\text{explore}}{=}50$ episodes per
(environment, backbone) deployment) and ablations.
We estimate the total compute used to produce the results in this
paper at approximately {1{,}900} A100-GPU-hours, of which
roughly {55\%} corresponds to evaluation rollouts (DIAL,
the two bounds, and six fixed-direction baselines across all
(environment, backbone, seed) combinations), {2\%} to DIAL
exploration, and the remainder to ablations (LLM-feature variants,
gate-capacity, wrong-direction reversal, regularizer,
controlled-reversal interventions, and supplementary multi-backbone
runs). The full research project, including preliminary experiments
and configurations that did not make it into the paper, used
roughly {2{,}600} A100-GPU-hours in total.

\subsection{Asset Licenses}
\label{app:licenses}

Table~\ref{tab:licenses} lists the public assets used in this
paper, together with their licenses. We use each asset within the
terms of its license; all datasets and model weights are obtained
from their official releases.

\begin{table}[h!]
\caption{Datasets and pre-trained models used in this paper.}
\label{tab:licenses}
\centering\small
\begin{tabular}{lll}
\toprule
\textbf{Asset} & \textbf{Type} & \textbf{License} \\
\midrule
HotpotQA~\cite{yang2018hotpotqa}              & Dataset       & CC BY-SA 4.0 \\
APPS~\cite{hendrycks2021apps}                 & Dataset       & MIT \\
WebShop~\cite{yao2022webshop}                 & Dataset+Env   & MIT (Princeton NLP) \\
FEVER~\cite{thorne2018fever}                  & Dataset       & CC BY-SA 3.0 \\
TextWorld Express~\cite{jansen2023twexpress}  & Environment   & Apache 2.0 \\
Plancraft~\cite{dagan2024plancraft}           & Environment   & MIT \\
Qwen3-4B~\cite{DBLP:journals/corr/abs-2505-09388}        & Model weights & Apache 2.0 \\
Llama-3.1-8B-Instruct~\cite{DBLP:journals/corr/abs-2407-21783} & Model weights & Llama 3.1 Community License \\
Phi-3.5-mini-instruct~\cite{DBLP:journals/corr/abs-2404-14219} & Model weights & MIT \\
\bottomrule
\end{tabular}
\end{table}

% ============================================================
% E. FULL RESULTS
% ============================================================
\section{Full Results Analysis}
\label{app:full-results}

Section~\ref{sec:main-results} (Table~\ref{tab:full-results})
reports SR and Cost ($\times$base) for all methods across 6
environments and 3 backbones. Cost includes all deployment LLM
calls (base proposer, gate overhead, rollouts) normalized by
base\_only cost. Bounds comparison is in
Table~\ref{tab:bounds-comparison} (\S\ref{app:bounds-comparison}).
This appendix elaborates per-backbone failure modes and cost
anomalies that the aggregated table does not surface.

\paragraph{Qwen3-4B.} DIAL achieves the highest SR across all
six environments. The $\times$base metric reveals hidden costs
of fixed-direction baselines: CATTS on APPS costs
6.00$\times$base (due to $K{=}5$ per-step voting) despite
barely triggering rollouts, while DIAL costs only
2.61$\times$ at 12.2\% higher SR. FEVER is the cost outlier:
DIAL still leads in SR (49.8\%) but at 16.51$\times$base, the
highest cost among all environments. This failure mode is
analyzed in \S\ref{sec:discussion}.

\paragraph{Phi-3.5-mini.} Fixed-direction baselines degrade
sharply on this backbone. On FEVER, all six baselines score at or
below 20\%, while DIAL reaches 20.5\%. On WebShop,
DIAL (57.3\%) exceeds the best baseline (s1\_budget, 43.0\%) by
14.3\% at lower cost (2.56$\times$ vs.\ 3.89$\times$).

\paragraph{Llama-3.1-8B.} Direction reversal interacts with model
capacity. On TWExpress, CaTS collapses to near-base performance
(35.8\% vs.\ base 36.5\%) because it rarely triggers, while DIAL
achieves 94.8\%. On FEVER, DIAL (54.7\%) is competitive with CaTS
(56.3\%), unlike Qwen3 where FEVER is the weakest environment.
On APPS, AUQ drops to
46.0\% (below the no-trigger baseline), demonstrating that wrong-direction
gating actively harms performance on this backbone.

% ============================================================
% F. SIGNAL AND FEATURE ANALYSES
% ============================================================
\section{Signal and Feature Analyses}
\label{app:signal-analyses}

This appendix collects empirical analyses of signal identity
and signal sufficiency referenced in
Section~\ref{sec:empirical-landscape}.
\S\ref{app:feature-selection} reports which features the
$\ell_1$ gate selects per environment and how their signed weights
align with the Two-Source Model.
\S\ref{app:llm-features} ablates the LLM feature layer.
\S\ref{app:regularizer-ablation} stress-tests sparsity by
ablating the regularizer under an expanded candidate pool.
\S\ref{app:auc} reports the AUC hierarchy
(single-signal $<$ multi-signal $<$ hidden-state probe) as the
empirical signature of $\sigma$-insufficiency.

\subsection{Per-Environment Feature Selection}
\label{app:feature-selection}

The LASSO coefficient matrix identifies which auxiliary features
carry information about the latent state type $\tau$ in each
environment. These are exactly the components of $\phi(s)$ that
lift $\sigma$ toward sufficiency for VOC (App~\ref{app:proof-necessity}).
\texttt{step\_count} is informative in 4 of 5 environments with
significant signal: it proxies \emph{information accumulation},
distinguishing late-step Type~I states (where information has
accumulated to a known shortage) from early-step Type~D states
(where multiple options are still viable). WebShop is the
exception, relying on \texttt{num\_available\_actions} and
task-specific features (\texttt{price\_mentioned},
\texttt{action\_is\_click}); these proxy \emph{decision optionality}
directly, as expected for a choice-dominated (Type~D) environment.
In every case, the features LASSO retains are interpretable as
proxies for the two-source decomposition rather than ad-hoc
engineering choices.

\subsection{LLM-Generated Features}
\label{app:llm-features}

\paragraph{LLM feature ablation.}
\begin{wraptable}{r}{0.45\textwidth}
\vspace{-12pt}
\caption{LLM feature ablation. SR (\%) with and without
LLM-generated features.}
\label{tab:llm-ablation}
\centering
\resizebox{0.43\textwidth}{!}{%
\begin{tabular}{lccc}
\toprule
Environment & w/ LLM & w/o LLM & $\Delta$ \\
\midrule
HotpotQA  & 95.2 & 93.8 & $+$1.4 \\
WebShop   & 43.8 & 42.7 & $+$1.1 \\
FEVER     & 49.8 & 40.7 & $+$9.1 \\
TWExpress & 99.0 & 97.2 & $+$1.8 \\
Plancraft & 23.3 & 23.4 & $-$0.2 \\
APPS      & 73.0 & 73.0 & $+$0.0 \\
\bottomrule
\end{tabular}%
}
\vspace{-10pt}
\end{wraptable}
Table~\ref{tab:llm-ablation} compares DIAL with and without
LLM-generated features. FEVER is the only environment where
LLM features contribute meaningfully ($+$9.1\%), where
task-specific signals (\texttt{has\_claim}, \texttt{text\_length})
extend the universal feature pool. In the remaining environments,
removing LLM features changes SR by ${<}$2\%; in Plancraft and
APPS the contribution is effectively zero. The LLM feature
layer is therefore an automation convenience rather than a
core driver of DIAL's gains.

\paragraph{Selected LLM features per environment.}
Table~\ref{tab:llm-features} reports the features generated by the
LLM reasoning step (Phase~2) for each environment. The LLM receives
exploration statistics and representative positive/negative examples,
then produces a Python function extracting 5 task-specific features
(the full prompt template is in \S\ref{app:llm-prompts}). LASSO
subsequently selects or discards each feature based on its predictive
value for rollout utility.

\begin{table*}[h!]
\centering
\caption{LLM-generated features per environment. Each environment
receives 5 LLM-proposed features; LASSO selects those with non-zero
coefficients. Environments with richer task-specific structure
(WebShop, TWExpress) retain more LLM features.}
\label{tab:llm-features}
\resizebox{\textwidth}{!}{%
\begin{tabular}{lcl}
\toprule
\textbf{Environment} & \textbf{Sel./Gen.}
  & \textbf{LASSO-Selected LLM Features} \\
\midrule
WebShop    & 4/5 & \texttt{price\_mentioned},
  \texttt{action\_is\_click}, \texttt{step\_early},
  \texttt{instruction\_keyword\_count} \\
TWExpress  & 4/5 & \texttt{text\_length},
  \texttt{closed\_ratio}, \texttt{action\_look\_around},
  \texttt{already\_open} \\
HotpotQA   & 3/5 & \texttt{has\_question},
  \texttt{entity\_count}, \texttt{fact\_keyword\_count} \\
FEVER      & 2/5 & \texttt{text\_length\_normalized},
  \texttt{has\_claim} \\
APPS       & 1/5 & \texttt{action\_type} \\
Plancraft  & 0/5 & (none selected) \\
\bottomrule
\end{tabular}%
}
\end{table*}

Environments with rich task-specific structure benefit most from LLM
features: WebShop's choice-dominated interface is well captured by
\texttt{price\_mentioned} and \texttt{action\_is\_click}, while
TWExpress's exploration mechanics are reflected in
\texttt{closed\_ratio} and \texttt{action\_look\_around}. In
contrast, Plancraft's rollout-harmful nature means no LLM feature
improves prediction, and LASSO correctly filters all five proposals.
This pattern aligns with the LLM ablation results
(Table~\ref{tab:llm-ablation}): the environment where LLM features
contribute most to SR (FEVER, $+$9.1\%) is precisely where the
LLM discovers unique signals (\texttt{has\_claim}) that the
universal feature pool lacks.

Figure~\ref{fig:llm-output-webshop} shows the complete LLM-generated feature extraction function for WebShop (seed~42), the environment with the most LLM features retained.

\begin{figure}[h]
\begin{tcolorbox}[
    title=\textbf{LLM Output for WebShop},
    colframe=PTGreenDark,
    colback=PTLightGreen,
    colbacktitle=PTGreenDark,
    coltitle=white,
    fonttitle=\bfseries,
    boxrule=0.9pt,
    arc=0pt,
    left=7pt,right=7pt,top=4pt,bottom=4pt,
    width=\columnwidth
  ]
\begin{lstlisting}[language=Python,basicstyle=\ttfamily\scriptsize,
  breaklines=true,columns=fullflexible,
  keywordstyle=\color{blue},commentstyle=\color{gray},
  showstringspaces=false]
def extract_features(state_text, step_count, action_text):
    import re
    features = {}
    # Instruction keyword density
    keywords = ["gluten free", "protein",
        "price lower than", "size", "color"]
    matches = sum(1 for kw in keywords
        if kw in state_text.lower())
    features['instruction_keyword_count'] = (
        float(matches) / len(keywords))
    # Early-step indicator
    features['step_early'] = (
        1.0 if step_count <= 2 else 0.0)
    # Click vs. search action
    features['action_is_click'] = (
        1.0 if action_text.startswith("click[")
        else 0.0)
    # Price constraint presence
    price_matches = len(re.findall(
        r"price lower than|price:|\$",
        state_text, re.IGNORECASE))
    features['price_mentioned'] = (
        float(price_matches) / 5.0)
    # Filter complexity
    filter_kws = ["size","color","flavor",
        "price","rating"]
    features['filter_count'] = sum(
        1 for kw in filter_kws
        if kw in state_text.lower()) / 5.0
    return features
\end{lstlisting}
\end{tcolorbox}
\caption{Complete LLM-generated feature extraction function for WebShop (seed 42), the environment with the most LLM features retained by LASSO.}
\label{fig:llm-output-webshop}
\end{figure}

\subsection{Regularizer Ablation}
\label{app:regularizer-ablation}

The signed-weight diagnostic of \S\ref{sec:learn} and DIAL's
robustness to noisy LLM-proposed features (\S\ref{sec:reason})
both rely on the \emph{exact} sparsity induced by the $\ell_1$
penalty in Eq.~\eqref{eq:lasso-full}. To verify that $\ell_1$
specifically, rather than any shrinkage or selection mechanism,
is responsible for these properties, we ablate the regularizer
while holding all other pipeline components fixed: same
exploration dataset $\mathcal{D}$, same candidate feature pool
$\phi_{\text{cand}}$, same 5-fold CV protocol for hyperparameters,
same threshold $\tau{=}0.5$ (fixed across variants for this
ablation to isolate the regularizer's effect; main DIAL uses
$\tau$ selected by cross-validation per Eq.~\eqref{eq:gate}). We compare five variants:

\begin{itemize}
\item \textbf{$\ell_1$-logistic} (DIAL default): joint sparsity and
shrinkage.
\item \textbf{$\ell_2$-logistic} (Ridge): shrinkage only; all
features retained with small non-zero weights.
\item \textbf{No regularization}: plain logistic fit on
$\phi_{\text{cand}}$.
\item \textbf{Elastic Net} ($\alpha{=}0.5$, mixing $\ell_1$ and
$\ell_2$).
\item \textbf{MI top-$k$ + unregularized}: hard feature selection
by mutual information (top-3), then unregularized logistic on the
retained features. This variant isolates \emph{selection} from
\emph{shrinkage}.
\end{itemize}

Table~\ref{tab:regularizer-ablation} reports SR and the mean number
of non-zero fitted weights across three environments spanning
signal strength: HotpotQA ($|\rho^*|{=}0.494$, strong),
WebShop ($|\rho^*|{=}0.444$, moderate), and Plancraft
($|\rho^*|{=}0.016$, weak).

\begin{table}[h]
\caption{Regularizer ablation on three environments ordered by
signal strength. SR (\%); ``\#nz'' is the mean number of non-zero
fitted weights across three seeds. For this ablation, $\phi_{\text{cand}}$ is expanded to a
stress-test pool of 31 features ($5$~universal $+\,3$~derived
$+\,20$~hidden-state PCA $+\,3$~auto-extracted) to evaluate each
regularizer's behavior under a high-dimensional candidate set;
adding the LLM feature layer brings the pool to $\approx$36.
$\ell_1$ is the only variant that achieves competitive SR
\emph{and} exact sparsity required by the signed-weight
diagnostic.}
\label{tab:regularizer-ablation}
\centering\small
\begin{tabular}{lcccccc}
\toprule
 & \multicolumn{2}{c}{HotpotQA} & \multicolumn{2}{c}{WebShop} & \multicolumn{2}{c}{Plancraft} \\
 & \multicolumn{2}{c}{$|\rho^*|{=}0.494$} & \multicolumn{2}{c}{$|\rho^*|{=}0.444$} & \multicolumn{2}{c}{$|\rho^*|{=}0.016$} \\
\cmidrule(lr){2-3}\cmidrule(lr){4-5}\cmidrule(lr){6-7}
Regularizer & SR & \#nz & SR & \#nz & SR & \#nz \\
\midrule
$\ell_1$ (DIAL)       & $94.8\pm0.2$ & 6.7 & $42.7\pm6.7$ & 6.3 & $27.2\pm1.6$ & 5.0 \\
$\ell_2$              & $94.3\pm1.0$ & 30  & $41.3\pm5.8$ & 30  & $26.3\pm1.0$ & 30  \\
No regularization     & $94.0\pm1.9$ & 30  & $36.2\pm7.3$ & 30  & $27.8\pm0.9$ & 30  \\
Elastic Net           & $94.3\pm1.6$ & 8.7 & $44.0\pm1.8$ & 6.0 & $27.2\pm0.8$ & 10  \\
MI top-3 + unreg.     & $95.0\pm1.1$ & 3.0 & $38.5\pm8.2$ & 3.0 & $26.3\pm1.2$ & 3.0 \\
\bottomrule
\end{tabular}
\end{table}

\noindent\textit{Note.} SR is reported as mean$\pm$std across three
seeds (Elastic Net on WebShop is reported for a single seed and
is excluded from std-based comparisons).
``\#nz'' is the mean number of features with non-zero coefficient;
for $\ell_2$, no~regularization, and MI~top-$k$ this number is
fixed by construction. The L1 row uses the same fitting pipeline
as the four other rows (the modified \texttt{principled\_scg.py}
with the LASSO step swapped per row), yielding internally
consistent SR across the table; the slight numerical
divergence from the corresponding cells in the main results
(Table~\ref{tab:full-results}) on Plancraft reflects an
implementation difference there
(\texttt{scg\_finetune\_lr}, default sklearn logistic) rather than
a methodological change. Across $\ell_1$ runs, sparsity adapts to
signal strength: HotpotQA and WebShop retain $6$--$7$ features
each, whereas Plancraft collapses to $\approx$5, absorbing the
remainder into the bias and yielding a near-constant trigger
probability consistent with that environment's low rollout
headroom (Figure~\ref{fig:trigger-adapt}).

The five rows isolate three hypotheses underlying DIAL's
regularizer choice.

\paragraph{$\ell_1$ vs.\ $\ell_2$: sparsity is required for
diagnosis, not just for predictive performance.}
Both penalties shrink noisy coefficients toward zero, and on
all three environments their gating SRs are within a single
percentage point: $94.8$~vs.\ $94.3$ on HotpotQA,
$42.7$~vs.\ $41.3$ on WebShop, $27.2$~vs.\ $26.3$ on Plancraft.
The critical distinction is interpretability rather than
performance: under $\ell_2$, every one of the $\approx$30
candidate features retains a small non-zero weight, collapsing
the clean \{informative, uninformative\} partition of
\S\ref{sec:learn} into a continuous ranking. The signed-weight
summary no longer reads off which signals the environment treats
as informative; it merely ranks them. Reports of ``non-zero
weights per environment'' in the main text (e.g., the direction
analyses of Table~\ref{tab:wrong-direction}) are only
interpretable under the exact sparsity that $\ell_1$ provides.

\paragraph{$\ell_1$ vs.\ unregularized logistic: robustness under
weak signals.}
The unregularized fit is the most fragile variant in the table.
On Plancraft ($|\rho^*|{\approx}0.02$), no~regularization actually
overfits the training split into a higher mean SR than $\ell_1$
($27.8$ vs.\ $27.2$) but with no diagnostic interpretation
($\#$nz~$=30$ across all seeds). The damaging case is WebShop
($|\rho^*|{\approx}0.44$, moderate signal): no~regularization
drops to $36.2\pm7.3$, fully $6.5$ points below $\ell_1$ at
$42.7\pm6.7$, with the wider standard deviation reflecting
seed-dependent overfitting onto whichever PCA components happen
to pseudo-correlate with utility on the training fold.
$\ell_1$'s shrinkage absorbs this noise into the bias term and
yields a gate that triggers more conservatively (lower
\texttt{rollouts/ep} at the same SR; cf.\
Appendix~\ref{app:trigger-rate}).

\paragraph{$\ell_1$ vs.\ MI top-$k$ + unregularized: joint
selection+shrinkage vs.\ selection alone.}
Hard feature selection by mutual information separates the two
roles $\ell_1$ plays. MI~top-3 recovers most of $\ell_1$'s SR
on strong- and weak-signal environments ($95.0$~vs.\ $94.8$ on
HotpotQA, $26.3$~vs.\ $27.2$ on Plancraft), confirming that on
those endpoints sparsity matters more than which exact features
are kept. On WebShop the gap opens to $4$ points ($38.5$ vs.\
$42.7$, std~$=8.2$, the largest in the table), reflecting
seed-dependent overfitting that $\ell_1$'s joint shrinkage
prevents. MI~top-$k$ also requires tuning $k$ per environment
rather than CVing a single $C$, reintroducing the
per-environment engineering cost DIAL aims to eliminate.

\paragraph{$\ell_1$ vs.\ Elastic Net.}
Elastic Net ($\alpha{=}0.5$) is statistically interchangeable
with $\ell_1$ on all three environments (within $1$~pt SR;
$\#$nz higher by $1$--$5$). We default to $\ell_1$ as the
strictly simpler choice: one hyperparameter instead of two,
exact zeros instead of small-but-nonzero weights, and direct
compatibility with the signed-weight diagnostic.

\paragraph{Summary.}
$\ell_1$ is the only variant in this table that simultaneously
delivers competitive SR, the exact sparsity required by the
signed-weight diagnostic of \S\ref{sec:learn}, and a single
CV-tunable hyperparameter; we use it as the default.

\subsection{AUC Hierarchy: Empirical Signature of $\sigma$-Insufficiency}
\label{app:auc}

Table~\ref{tab:auc-hierarchy} is the empirical instantiation of
the sufficient-statistic claim of \S\ref{sec:toy-model}: it
measures, on held-out data, how much of $\mathrm{VOC}$ each gate
family can recover. Single entropy is near chance
(AUC${\approx}$0.50) in every environment, the operational
signature of $\sigma$'s insufficiency for VOC. Multi-signal
logistic gates that augment $\sigma$ with universal auxiliary
features reach 0.74--0.92, recovering most of the available
signal; hidden-state probes (which have access to the model's
full internal representation, an upper bound on what any feature
map $\phi(s)$ can extract) reach 0.79--0.99. The gap between the
single-entropy row and the multi-signal row is therefore a
quantitative measure of how much auxiliary conditioning is
required to lift $\sigma$ toward sufficiency for VOC; the
residual gap between multi-signal and hidden-state probe is the
sufficiency that universal features fail to capture (small in
most environments). The bottleneck is signal \emph{combination},
not gate \emph{capacity}, and the multi-signal LR row is the
empirical case for DIAL's design.

\begin{table}[H]
\caption{AUC by gate family. Single entropy is near chance;
multi-signal gates close most of the gap to hidden-state probes.
FEVER and TWExpress are omitted because their short episodes
yield insufficient held-out data for reliable AUC estimation.}
\label{tab:auc-hierarchy}
\centering\small
\begin{tabular}{lcccc}
\toprule
Gate family & HotpotQA & APPS & WebShop & Plancraft \\
\midrule
Single entropy      & 0.50 & 0.56 & 0.50 & 0.50 \\
Best single signal  & 0.78 & 0.78 & 0.90 & 0.74 \\
Multi-signal LR     & 0.85 & 0.76 & 0.92 & 0.74 \\
Hidden-state probe  & 0.87 & 0.79 & 0.99 & 0.95 \\
\bottomrule
\end{tabular}
\end{table}

% ============================================================
% G. ROBUSTNESS AND ADDITIONAL EXPERIMENTS
% ============================================================
\section{Robustness and Additional Experiments}
\label{app:robustness}

This appendix collects Two-Source Model verification data,
robustness checks for direction reversal, and additional
experimental analyses.
\S\ref{app:reversal-norm} and \S\ref{app:reversal-alternatives}
rule out scale and reward-distribution artifacts as causes of the
observed sign reversal.
\S\ref{app:temporal}, \S\ref{app:cross-setting}, and
\S\ref{app:multi-backbone} verify the Two-Source Model's
predictions across time, environment composition, and model
backbone.
\S\ref{app:trigger-rate} analyzes DIAL's deployment trigger
behavior across environments.

\subsection{Robustness of Reversal to Entropy Quantile-Normalization}
\label{app:reversal-norm}

A natural concern with Table~\ref{tab:signal-discovery} is that
cross-backbone reversal could reflect incomparable raw entropy
scales across tokenizers rather than a structural sign change.
We rule this out by replacing each $\sigma(s)$ with its empirical
quantile rank $Q(s) \!\in\! [0, 1]$ under three pooling schemes:
\textbf{S1} ranks within each (environment, backbone) cell,
\textbf{S2} ranks within each backbone's pool of states (across
environments), and \textbf{S3} ranks within each environment's
pool (across backbones). For each scheme we recompute Spearman
$\rho(Q, U)$ and Pearson $\rho(Q, U)$ per cell with 1000-resample
bootstrap confidence intervals.

\paragraph{Spearman: invariant by construction.} Per-cell
Spearman $\rho$ is mathematically rank-invariant under any
within-cell monotone transform of $\sigma$, so all three schemes
give identical Spearman $\rho$ to the raw column within each
cell. We report this transparently: it confirms that rank
structure is preserved but is not, in itself, a non-trivial test.

\paragraph{Pearson: scale-sensitive, the substantive test.}
Pearson $\rho$ varies under non-linear normalization. We
partition the 18 (env, backbone) cells by raw Pearson $|\rho|$
at the 0.13 threshold and report the 11 strong-signal cells
($|\rho|\geq 0.13$) in Table~\ref{tab:phase-a-norm}.
\textbf{All 11 strong-signal cells are robust}: Pearson sign is
stable under all three normalization schemes on 10 cells and
2/3 schemes on 1 cell (Phi-3.5 APPS, where S3 yields $-0.054$
with CI straddling zero). Critically, every cell that the
paper's reversal claim invokes, including the cross-backbone
anchors on HotpotQA, APPS, and FEVER, is in this strong block.
The remaining 7 cells fall into two categories that we exclude
from the table for brevity. \textbf{Weak-signal cells} (4 of 7:
Qwen3 WebShop, Phi-3.5 HotpotQA, and Plancraft on \{Qwen3,
Llama-3.1\}) all have raw $|\rho|<0.13$ and CIs already
straddling zero; under each of S1/S2/S3 they remain
inconclusive, but since the paper does not claim reversal on
these cells, this is expected sample-level noise rather than a
refutation. \textbf{Insufficient-data cells} (3 of 7: TWExpress
on \{Phi-3.5, Llama-3.1\} and Plancraft on Phi-3.5) have logged
$\sigma$ with $\leq 2$ unique values or no positive $U$ labels,
making correlation analysis statistically ill-defined. The
pattern in the strong block, where stability tracks signal
strength and no strong-signal cell flips under any scheme, is
what one expects if the reversal is structural rather than a
tokenizer-scale artifact.

\begin{table*}[t]
\centering
\caption{Robustness of cross-backbone reversal to entropy
quantile-normalization. We report Spearman $\rho(\sigma,U)$
(invariant under within-cell monotone transforms, so shared
across raw/S1/S2/S3) and Pearson $\rho$ under raw and three
quantile schemes: \textbf{S1} ranks within each (env, backbone),
\textbf{S2} within each backbone's pool, \textbf{S3} within each
environment's pool. Listed are the \textbf{11 strong-signal
cells} (raw Pearson $|\rho|\geq 0.13$) covering every reversal
claim. \textbf{Robust} = sign stable vs.\ raw under all three
schemes (CIs on the same side).
$^{\ddagger}$ Phi-3.5 APPS robust under S1, S2 but inconclusive
under S3. Excluded (discussed in prose): 4 weak-signal cells
($|\rho|<0.13$, raw CIs straddle zero) and 3 cells with
insufficient logged data. $^*$: $p<0.05$, 1000-resample
bootstrap.}
\label{tab:phase-a-norm}
\footnotesize
\begin{tabular}{l l r c c c c c l}
\toprule
& & & \textbf{Spearman} & \multicolumn{4}{c}{\textbf{Pearson $\rho(\sigma, U)$}} & \\
\cmidrule(lr){5-8}
\textbf{Backbone} & \textbf{Env} & \textbf{N} & raw=S1=S2=S3 & raw & S1 & S2 & S3 & \textbf{Conclusion} \\
\midrule
Llama-3.1 & FEVER     & 840  & $+0.428^*$ & $+0.481$ & $+0.418$ & $+0.454$ & $+0.432$ & \cmark\ Robust \\
Phi-3.5   & WebShop   & 751  & $+0.335^*$ & $+0.384$ & $+0.374$ & $+0.382$ & $+0.394$ & \cmark\ Robust \\
Llama-3.1 & HotpotQA  & 244  & $-0.346^*$ & $-0.344$ & $-0.337$ & $-0.341$ & $-0.320$ & \cmark\ Robust \\
Qwen3-4B  & TWExpress & 798  & $-0.290^*$ & $-0.319$ & $-0.290$ & $-0.329$ & $-0.324$ & \cmark\ Robust \\
Llama-3.1 & WebShop   & 948  & $+0.287^*$ & $+0.298$ & $+0.287$ & $+0.303$ & $+0.321$ & \cmark\ Robust \\
Llama-3.1 & APPS      & 475  & $-0.242^*$ & $-0.217$ & $-0.271$ & $-0.233$ & $-0.287$ & \cmark\ Robust \\
Qwen3-4B  & APPS      & 439  & $+0.317^*$ & $+0.203$ & $+0.303$ & $+0.265$ & $+0.285$ & \cmark\ Robust \\
Phi-3.5   & APPS      & 400  & $-0.129^*$ & $-0.197$ & $-0.104$ & $-0.193$ & $-0.054$ & \cmark\ Robust \\
Qwen3-4B  & FEVER     & 282  & $-0.119^*$ & $-0.186$ & $-0.123$ & $-0.237$ & $-0.192$ & \cmark\ Robust \\
Phi-3.5   & FEVER     & 824  & $-0.156^*$ & $-0.158$ & $-0.161$ & $-0.160$ & $-0.157$ & \cmark\ Robust \\
Qwen3-4B  & HotpotQA  & 1208 & $-0.327^*$ & $-0.138$ & $-0.330$ & $-0.264$ & $-0.323$ & \cmark\ Robust \\
\bottomrule
\end{tabular}
\end{table*}

The cross-backbone reversal pattern on HotpotQA, APPS, and FEVER
therefore survives all three normalization schemes, supporting
the claim of \S\ref{sec:empirical-landscape} that the reversal
is structural rather than a tokenizer-scale artifact.

\subsection{Ruling Out Alternative Explanations: Reward and
            Entropy Calibration}
\label{app:reversal-alternatives}

We further verify that the reversal is not explained by reward
parameterization or entropy estimator calibration.

\paragraph{Reward transformations.} Spearman $\rho(\sigma, U)$
is rank-invariant under positive monotone transforms of $U$,
so $U \!\to\! \alpha U$ for $\alpha > 0$ exactly preserves
$\rho$ across all 15 analyzable cells; for $\alpha < 0$ the
sign flips as expected (rank order reversed).
Table~\ref{tab:phase-a-alternatives} shows the $\alpha\!=\!-1$
column for completeness. Reward parameterization is therefore
not a confound. Additive shifts $U \!\to\! U + c$ also preserve
the correlation, but trivially flip the binary classifier label
$\mathrm{sign}(U)$ on states with $U\!=\!0$; this is a
degenerate property of the $U$ distribution rather than a
sensitivity, and we do not include label-flip data in the
robustness verdict.

\paragraph{Entropy transformations.} For $\sigma \!\to\!
\sigma/T$ with $T \in \{0.5, 1, 2, 5\}$, both Spearman and
Pearson $\rho$ are exactly preserved (positive linear scaling).
Under non-linear transforms $\sigma \!\to\! \sigma^\alpha$ with
$\alpha \in \{0.5, 1, 2\}$ and $\sigma \!\to\! \log(\sigma +
\epsilon)$, Spearman $\rho$ remains invariant by rank-preservation
on every cell; Pearson varies (as expected for non-linear
transforms) but only one cell, Phi-3.5 / APPS under
$\log\sigma$, exhibits a Pearson sign flip
(Table~\ref{tab:phase-a-alternatives}). Spearman on that cell
is stable at $-0.129$, so the structural reversal claim is
unaffected. We report this single anomaly transparently rather
than averaging it away.

\paragraph{Trajectory length.}
Trajectory length varies across environments and correlates with
both entropy and utility. To rule out length as a confounder, we
stratify by step count and recompute $\rho(\text{entropy}, U)$
within each stratum. Direction reversal persists within
trajectory-length strata where utility variance is non-zero. In
HotpotQA, $\rho$ remains negative across all three strata
($-0.18$/$-0.46$/$-0.42$). The reversal is not an artifact of
trajectory length.

\begin{table*}[t]
\centering
\caption{Robustness of $\rho(\sigma, U)$ to monotone transforms.
Spearman $\rho$ is rank-invariant under monotone transforms of
either variable, so the $\sigma^{0.5}$, $\sigma^{2}$, and
$\log\sigma$ columns must match raw up to numerical noise (a
sanity check on the entropy pipeline); under
$U\!\to\!\alpha U$ with $\alpha<0$, sign flips as expected.
Pearson $\rho$ varies under non-linear $\sigma$ transforms; the
only Pearson sign flip is Phi-3.5/APPS under $\log\sigma$
($+0.027$), where Spearman stays stable at $-0.129$. Excluded:
three cells with $|\sigma_{\mathrm{unique}}|\!\leq\!2$ or no
positive $U$ labels (TWExpress on \{Phi-3.5, Llama-3.1\},
Plancraft on Phi-3.5), where correlation is ill-defined.}
\label{tab:phase-a-alternatives}
\footnotesize
\resizebox{\textwidth}{!}{%
\begin{tabular}{l l c c c c c c c c}
\toprule
& & \multicolumn{5}{c}{\textbf{Spearman $\rho$}} & \multicolumn{3}{c}{\textbf{Pearson $\rho$}} \\
\cmidrule(lr){3-7} \cmidrule(lr){8-10}
\textbf{Backbone} & \textbf{Env} & raw & $\alpha U$, $\alpha\!=\!-1$ & $\sigma^{0.5}$ & $\sigma^{2}$ & $\log\sigma$ & $\sigma^{0.5}$ & $\sigma^{2}$ & $\log\sigma$ \\
\midrule
Qwen3-4B & HotpotQA  & $-0.327$ & $+0.327$ & $-0.327$ & $-0.327$ & $-0.327$ & $-0.201$ & $-0.085$ & $-0.317$ \\
Qwen3-4B & APPS      & $+0.317$ & $-0.317$ & $+0.317$ & $+0.317$ & $+0.317$ & $+0.273$ & $+0.085$ & $+0.305$ \\
Qwen3-4B & WebShop   & $+0.133$ & $-0.133$ & $+0.133$ & $+0.133$ & $+0.133$ & $+0.127$ & $+0.008$ & $+0.238$ \\
Qwen3-4B & FEVER     & $-0.119$ & $+0.119$ & $-0.119$ & $-0.119$ & $-0.119$ & $-0.228$ & $-0.140$ & $-0.240$ \\
Qwen3-4B & TWExpress & $-0.290$ & $+0.290$ & $-0.290$ & $-0.290$ & $-0.290$ & $-0.305$ & $-0.281$ & $-0.264$ \\
Qwen3-4B & Plancraft & $-0.016$ & $+0.016$ & $-0.016$ & $-0.016$ & $-0.016$ & $-0.019$ & $-0.031$ & $-0.010$ \\
\midrule
Phi-3.5 & HotpotQA & $+0.184$ & $-0.184$ & $+0.184$ & $+0.184$ & $+0.184$ & $+0.170$ & $+0.042$ & $+0.107$ \\
Phi-3.5 & APPS     & $-0.129$ & $+0.129$ & $-0.129$ & $-0.129$ & $-0.129$ & $-0.161$ & $-0.196$ & $+0.027$ \\
Phi-3.5 & WebShop  & $+0.335$ & $-0.335$ & $+0.335$ & $+0.335$ & $+0.335$ & $+0.399$ & $+0.324$ & $+0.296$ \\
Phi-3.5 & FEVER    & $-0.156$ & $+0.156$ & $-0.156$ & $-0.156$ & $-0.156$ & $-0.153$ & $-0.156$ & $-0.049$ \\
\midrule
Llama-3.1 & HotpotQA  & $-0.346$ & $+0.346$ & $-0.346$ & $-0.346$ & $-0.346$ & $-0.332$ & $-0.335$ & $-0.288$ \\
Llama-3.1 & APPS      & $-0.242$ & $+0.242$ & $-0.242$ & $-0.242$ & $-0.242$ & $-0.266$ & $-0.150$ & $-0.326$ \\
Llama-3.1 & WebShop   & $+0.287$ & $-0.287$ & $+0.287$ & $+0.287$ & $+0.287$ & $+0.310$ & $+0.258$ & $+0.310$ \\
Llama-3.1 & FEVER     & $+0.428$ & $-0.428$ & $+0.428$ & $+0.428$ & $+0.428$ & $+0.431$ & $+0.528$ & $+0.354$ \\
Llama-3.1 & Plancraft & $-0.041$ & $+0.041$ & $-0.041$ & $-0.041$ & $-0.041$ & $-0.014$ & $-0.035$ & $-0.003$ \\
\bottomrule
\end{tabular}%
}
\end{table*}

Across the four alternative explanations considered (rollout
noise, reward bias, search depth, calibration error), only
rollout noise and search depth would require new rollouts to
test directly. Reward bias and calibration error are post-hoc
verifiable on the logged data and are ruled out above. We leave
the rollout-noise and search-depth ablations as future work; the
two-source mixture interpretation of \S\ref{sec:toy-model} is
consistent with all observational data and survives all
post-hoc robustness checks within reach.

\subsection{Temporal Dynamics (P1)}
\label{app:temporal}

Table~\ref{tab:temporal} reports the full early/late $\rho$ data
referenced in Figure~\ref{fig:temporal-shift}. The Two-Source Model
predicts that $\rho(\text{entropy}, U)$ should decrease from early
to late steps within each episode, as late-step uncertainty
increasingly reflects the residual Type~I component after
information-gathering attempts.

\begin{table}[h]
\centering
\caption{Temporal shift in $\rho$(entropy, $U$). All environments
show $\rho$ shifting more negative from early to late steps,
consistent with P1. Plancraft omitted ($|\rho^*|{=}0.016$, too weak
for meaningful comparison).}
\label{tab:temporal}
\begin{tabular}{llccccl}
\toprule
\textbf{Env} & \textbf{Phase} & $\rho$ & \textbf{95\% CI}
  & $n$ & $p$ & \textbf{Shift} \\
\midrule
HotpotQA & early & $-$0.176 & [$-$0.260, $-$0.091] & 509
  & $<$0.001
  & \multirow{2}{*}{$\Delta{=}{-}0.261$} \\
  & late  & $-$0.437 & [$-$0.500, $-$0.370] & 699
  & $<$0.001 & \\
\midrule
APPS & early & $+$0.102 & [$-$0.013, $+$0.209] & 290
  & 0.084
  & \multirow{2}{*}{$\Delta{=}{-}0.246$} \\
  & late  & $-$0.144 & [$-$0.281, $+$0.001] & 199
  & 0.043 & \\
\midrule
WebShop & early & $+$0.285 & [$+$0.222, $+$0.346] & 600
  & $<$0.001
  & \multirow{2}{*}{$\Delta{=}{-}0.291$} \\
  & late  & $-$0.006 & n.s. & 473 & 0.895 & \\
\midrule
FEVER & early & $+$0.054 & n.s. & 200 & 0.446
  & \multirow{2}{*}{$\Delta{=}{+}0.024$} \\
  & late  & $+$0.078 & n.s. & 82 & 0.486 & \\
\midrule
TWExpress & early & $+$0.161 & --- & 422 & $<$0.001
  & \multirow{2}{*}{$\Delta{=}{-}0.153$} \\
  & late  & $+$0.008 & n.s. & 376 & 0.877 & \\
\bottomrule
\end{tabular}
\end{table}

The prediction holds across 4 of 5 testable environments:
HotpotQA shows the strongest shift ($\Delta{=}{-}0.261$), with
late-step entropy reflecting deeper information poverty after failed
evidence retrieval. WebShop's strong positive early-phase signal
($\rho{=}{+}0.285$) vanishes in the late phase ($\rho{=}{-}0.006$,
n.s.), consistent with the choice-dominated structure being resolved
early. In APPS, $\rho$ reverses from marginally positive ($+$0.102)
to negative ($-$0.144), as late-step decision complexity gives way
to residual information poverty.

FEVER is the exception: both phases show weak, non-significant
positive $\rho$, consistent with FEVER's short episodes
(median 8 steps) concentrating most rollout value at step~0--1,
leaving insufficient late-step variation for a meaningful shift.

\subsection{Cross-setting Consistency at Fixed Backbone}
\label{app:cross-setting}

Beyond the within-episode (P1) and within-task interventional
(P2) tests in the main text, the Two-Source Model implies that
settings with similar $p_I$ should exhibit similar $\rho$. With
the backbone held fixed at Qwen3
(Table~\ref{tab:signal-discovery}), this consistency holds:
FEVER and HotpotQA (both search-based QA) share negative $\rho$,
and WebShop and APPS (both involving choice among alternatives)
share positive $\rho$. As predicted by the (environment,
backbone)-indexed mixture of \S\ref{sec:toy-model}, however,
task-structure similarity alone does not pin down $\rho$ when
the backbone changes: the same environment can sit on opposite
sides of $p_I^*$ under different LLMs
(Appendix~\ref{app:multi-backbone}). We therefore treat this
cross-setting consistency as a complementary observational
check rather than a standalone falsifiable prediction; the
causal evidence in P2 is the more decisive test of the
mixture-driven account.

\begin{figure*}[t]
	\centering
	\includegraphics[width=\textwidth]{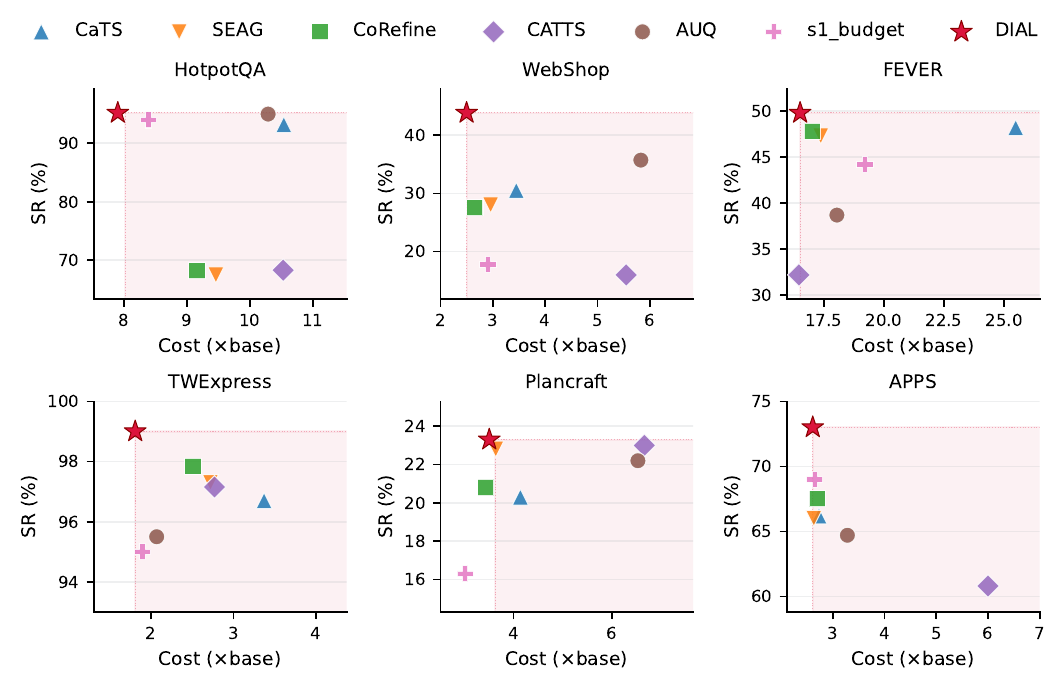}
	\caption{SR vs.\ Cost ($\times$base) across all 6 environments
		on Qwen3-4B. Shaded region: dominated by DIAL (lower SR
		\emph{and} higher cost). Full numerical results across three
		backbones are in Table~\ref{tab:full-results}.}
	\label{fig:pareto}
\end{figure*}
\subsection{Multi-Backbone Verification}
\label{app:multi-backbone}

We validate our core findings on three backbones from three vendors:
Qwen3-4B-Instruct (Alibaba, 4B), Phi-3.5-mini-instruct (Microsoft,
3.8B), and Llama-3.1-8B-Instruct (Meta, 8B) across all 6
environments. Table~\ref{tab:full-results}
(\S\ref{sec:main-results}) reports full numerical results, and
Figure~\ref{fig:pareto} below visualizes the Qwen3-4B Pareto
frontier.

This appendix supports the (environment, backbone)-indexed
formulation of \S\ref{sec:toy-model}: the same task changes
$p_I$ (and, when it crosses $p_I^*$, the sign of $\rho$) when
the backbone changes, so backbone-level reversal is a prediction
of the two-source model rather than a confound to be controlled
for.

\begin{figure*}[h!]
	\centering
	\includegraphics[width=\textwidth]{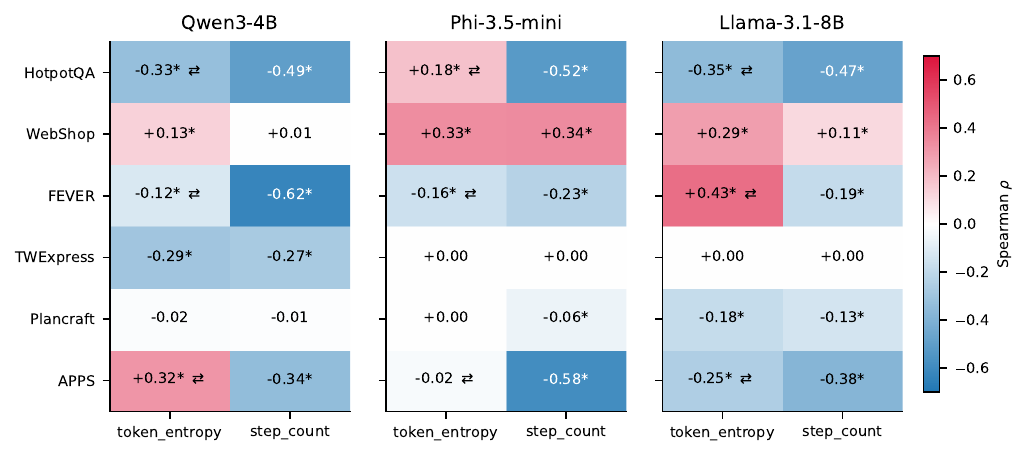}
	\caption{Multi-backbone signal direction heatmap. $^*$ marks
	cells significant at $p{<}0.05$. Cells marked with $\rightleftarrows$
	are \emph{sign-flip cells}: the same (environment, signal) pair
	yields a significantly positive $\rho$ on at least one backbone
	and a significantly negative $\rho$ on another, i.e.\ the
	signal--utility direction is reversed across model families.
	Entropy direction is model-dependent; structural signals
	(\texttt{step\_count}) are more stable.}
	\label{fig:multi-backbone-heatmap}
\end{figure*}
\begin{figure*}[h!]
	\centering
	\includegraphics[width=\textwidth]{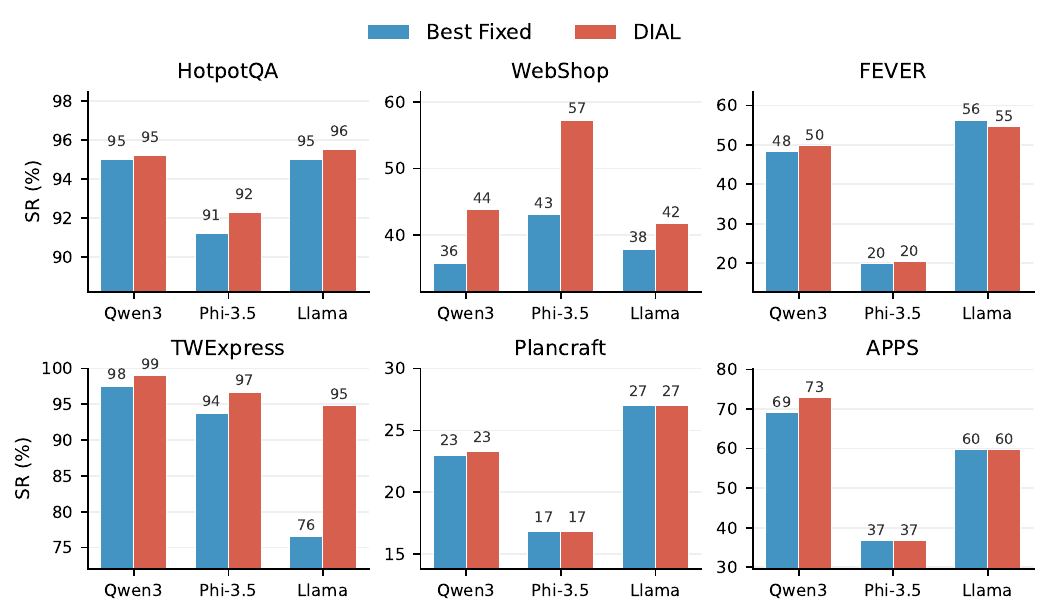}
	\caption{Cross-backbone comparison: DIAL (red) vs.\ best
		fixed-direction baseline (blue) across 3 backbones. Fixed baselines
		show large SR variance across backbones; DIAL adapts robustly.}
	\label{fig:multi-backbone-sr}
\end{figure*}

Figure~\ref{fig:multi-backbone-heatmap} visualizes this direction
variation, with black-outlined cells marking sign flips. The downstream
consequence appears in Figure~\ref{fig:multi-backbone-sr}: on
Phi-3.5 FEVER, most fixed-direction baselines score at or below
base\_only (7.2\% base $\to$ 8.5--23\% across baselines), while
DIAL adapts to the different sign and reaches 20.5\%. The
fixed-direction assumption is brittle when the model changes;
DIAL's per-(environment, backbone) direction discovery handles
this transparently.

\subsection{Environment-Adaptive Trigger Behavior}
\label{app:trigger-rate}

Beyond direction and feature selection, DIAL's gating magnitude
also adapts to each environment without explicit programming.
Figure~\ref{fig:trigger-adapt} shows the per-step trigger rate
across all 6 environments, ordered by rollout headroom~$\Delta$.
In rollout-safe TWExpress ($\Delta{=}{+}31.8$), DIAL triggers
aggressively (RR\,=\,73\%), consistent with the high headroom.
In rollout-harmful Plancraft ($\Delta{=}{-}7.0$), the trigger
rate decays from 49\% at step~0 to ${<}$20\% in later steps,
as the gate learns that late-step rollouts are increasingly
harmful. WebShop and APPS show moderate, stable trigger rates.
This full behavioral range emerges entirely from the
per-environment signed weights fitted in \S\ref{sec:learn}: DIAL
never receives an explicit headroom estimate, because signal
direction already encodes whether the environment rewards or
penalizes additional computation.

\begin{figure*}[h]
\centering
\includegraphics[width=\textwidth]{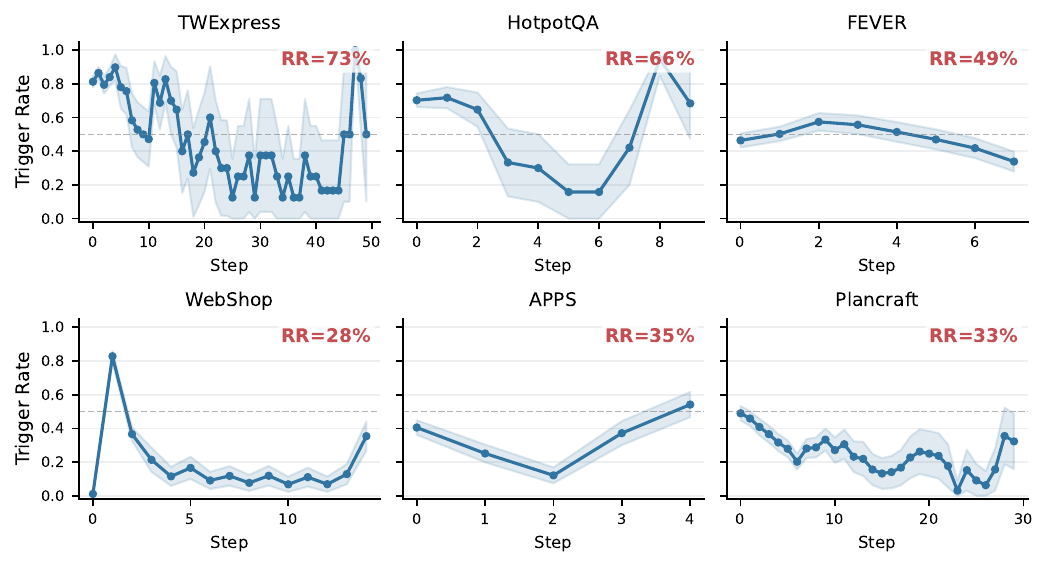}
\caption{Environment-adaptive trigger behavior. Panels ordered by
rollout headroom $\Delta$. Solid line: per-step trigger rate with
95\% CI band. Dashed gray: 50\% reference. RR: overall trigger
rate. DIAL's trigger rate ranges from 73\% (TWExpress) to $<$20\%
(late-step Plancraft), emerging from direction learning alone.}
\label{fig:trigger-adapt}
\end{figure*}

% ============================================================
% H. BROADER IMPACT
% ============================================================
\section{Broader Impact}
\label{app:impact}

This work addresses computational efficiency in LLM agent
deployment. By reducing unnecessary optimizer invocations, DIAL
lowers the energy and cost footprint of test-time compute without
sacrificing task performance. The prescriptive framework
(characterize information structure before designing gates) may
help practitioners avoid deploying fixed-direction methods that
actively harm performance in mismatched environments. We do not
foresee direct negative societal impacts beyond those inherent to
the LLM agents themselves.

% \newpage
% \input{checklist}

\end{document}